\documentclass[lettersize,journal]{IEEEtran}
\usepackage{graphicx}
\usepackage{subcaption}
\usepackage{cite}  
\usepackage{amsmath,amsfonts}
\usepackage{algorithmic}
\usepackage{array}
\usepackage[caption=false,font=normalsize,labelfont=sf,textfont=sf]{subfig}
\usepackage{textcomp}
\usepackage{stfloats}
\usepackage{url}
\usepackage{verbatim}
\usepackage{amsmath}
\usepackage{amssymb}
\usepackage{amsfonts}
\usepackage{mathtools}
\usepackage{amsthm}

\usepackage{pgfplots}
\usepackage{booktabs}
\usepackage{algorithm}
\usepackage{algorithmic}
\usepackage{listings}
\usepackage{multirow}

\usepackage{pdfpages}
\usepackage{tabularx}
\usepackage{times}
\usepackage{url}
\usepackage{cite}
\usepackage{microtype}

\usepackage{tabularx}
\usepackage{siunitx}
\theoremstyle{plain}
\newtheorem{theorem}{Theorem}
\newtheorem{definition}{Definition}

\newtheorem{lemma}[theorem]{Lemma}

\theoremstyle{remark}

\def\BibTeX{{\rm B\kern-.05em{\sc i\kern-.025em b}\kern-.08em
    T\kern-.1667em\lower.7ex\hbox{E}\kern-.125emX}}
\usepackage{balance}
\begin{document}
\title{Inductive Convolution Nuclear Norm Minimization for  \\
           Tensor Completion with Arbitrary Sampling}
\author{Wei Li, ~\IEEEmembership{Student Member,~IEEE}, Yuyang Li,~\IEEEmembership{Student Member,~IEEE}, Kaile Du,~\IEEEmembership{Student Member,~IEEE}, Yi Yu,~\IEEEmembership{Student Member,~IEEE}, and Guangcan Liu~\IEEEmembership{Senior Member,~IEEE}
\thanks{Manuscript received .}
\IEEEcompsocitemizethanks{
The authors are with the School of Automation, Southeast University, Nanjing, China 210018. (e-mail: liwei1999@seu.edu.cn,yuyangli@seu.edu.cn,  kailedu@seu.edu.cn, yuyi2001@seu.edu.cn, guangcanliu@seu.edu.cn).
}
\thanks{Copyright}
}

\markboth{Journal of \LaTeX\ Class Files,~Vol.~18, No.~9, September~2020}%
{How to Use the IEEEtran \LaTeX \ Templates}

\maketitle

\begin{abstract}
The recently established Convolution Nuclear Norm Minimization (CNNM) addresses the problem of \textit{tensor completion with arbitrary sampling} (TCAS), which involves restoring a tensor from a subset of its entries sampled in an arbitrary manner. Despite its promising performance, the optimization procedure of CNNM needs performing Singular Value Decomposition (SVD) multiple times, which is computationally expensive and hard to parallelize. To address the issue, we reformulate the optimization objective of CNNM from the perspective of convolution eigenvectors. By introducing pre-learned convolution eigenvectors which are shared among different tensors, we propose a novel method called Inductive Convolution Nuclear Norm Minimization (ICNNM), which bypasses the SVD step so as to decrease significantly the computational time. In addition, due to the extra prior knowledge encoded in the pre-learned convolution eigenvectors, ICNNM also outperforms CNNM in terms of recovery performance. Extensive experiments on video completion, prediction and frame interpolation verify the superiority of ICNNM over CNNM and several other competing methods. 
\end{abstract}

\begin{IEEEkeywords}
 Convolution, low-rankness, TCAS, image restoration.
\end{IEEEkeywords}

\section{Introduction}
\IEEEPARstart{M}{issing} data is a prevalent issue that compromises the reliability of the entire dataset and significantly distorts inferences drawn from it. Consequently, the task of filling in the missing entries of partially observed tensors is an important task, which is referred to as \textit{tensor completion}~\cite{4797640,5452187}. Provided that the locations of the observed entries are distributed uniformly at random (i.e., the sampling pattern is uniform), many methods with theoretical guarantees have been established in the literature~\cite{5466511,10.5555/2999611.2999705,10.5555/3157382.3157431,7536166,10.5555/3454287.3454454,8606166,WU2023646}. Apparently, the sampling pattern in realistic environment may not be uniform, attracting considerable attentions to the direction of \textit{tensor completion with nonuniform sampling}, e.g.,~\cite{norandom1,norandom2,norandom5,norandom6}, in which the sampling pattern obeys specific but possibly nonuniform distributions.\par

However, in modern applications such as data forecasting~\cite{9548664,liu2022recovery,liu2022time} and  image/video super-resolution~\cite{videoso,9048735,10465659}, the locations of observed entries may not follow any distributions of explicitly known forms. To cover such difficult cases,~\cite{liu2022recovery} suggested a rather inclusive problem termed \textit{tensor completion with arbitrary sampling} (TCAS):

\begin{definition} Let $L_0 \in \mathbb{R}^{m_1\times \cdots \times m_n}$ be the target tensor of order $n$ ($n\geq1$), and $\Omega \subset \left\{1, \cdots, m_1\right\} \times \cdots \times \left\{1, \cdots, m_n\right\}$ be the sampling set consisting of locations of observed entries. Now we know a subset of entries in $L_0$ according to $\Omega$, but their locations may be \textbf{arbitrarily} distributed. Can we recover $L_0$ in an exact fashion?\end{definition}

To address the above TCAS problem,~\cite{liu2022recovery} proposed a convex program known as Convolution Nuclear Norm Minimization (CNNM):
\begin{align}
  \label{eq:cnnm}
  &\min_{L} \left\| \mathcal{A}_k(L) \right\|_* ,\quad
  \text{s.t.}  \mathcal{P}_{\Omega}(L) = \mathcal{P}_{\Omega}(L_0),
\end{align}
where $\mathcal{A}_k$ is a linear map from $\mathbb{R}^{m_1\times \cdots \times m_n}$ to $\mathbb{R}^{m\times k}$ such that $\mathcal{A}_k(\cdot)$ generates the \textit{convolution matrix} of tensors, $\left\|\cdot \right\|_*$ is the \textit{nuclear norm}~\cite{envelope} of matrices, $\mathcal{P}_{\Omega}$ denotes the orthogonal projection onto $\Omega$, $m=\Pi_{j=1}^nm_j$, and $k$ is a parameter. By and large, CNNM is to seek a tensor that not only meets the observed entries on the sampling set but also attains the lowest \textit{convolution rank} (i.e., the rank of the convolution matrix). Due to the strengths of the convolution operator, CNNM provably solves TCAS, as long as the target $L_0$ is \textit{convolutionally low-rank} (i.e., $\mathcal{A}_k(L_0)$ is low-rank). It has also been proven that the smooth signals with bounded values, e.g., images and videos, possess the convolutional low-rankness property~\cite{liu2022recovery}. Despite its theoretical soundness and promising performance, CNNM requires performing Singular Value Decomposition (SVD) on fairly large convolution matrices several times during inference, which is very costly and hard to accelerate by parallel units such as Graphics Processing Unit (GPU).

To resolve the issue, we shall introduce in this paper a novel method termed Inductive Convolution Nuclear Norm Minimization (ICNNM). First notice that the nuclear norm of convolution matrix can be rewritten as in the following:
\begin{align}\label{eq:cnnorm}
&\left\|\mathcal{A}_k(L)\right\|_* = \sum_{i=1}^k\|L\star\kappa_i(L)\|_F,
\end{align}
where $\kappa_i(L)\in\mathbb{R}^{k_1\times\cdots\times{}k_n}$ is the $i$th \textit{convolution eigenvector} of $L$, $\star$ denotes the convolution operator, and $k_1\times\cdots\times{}k_n$ is the \textit{kernel size} used for defining the convolution matrix $\mathcal{A}_k(L)$ with $k=\Pi_{j=1}^nk_j$. In general, the so-called convolution eigenvectors play a similar role as the feature filters in Convolution Neural Networks~\cite{LeNet5:1998}, and it has been well-known that the feature filters can be shared across different data tensors. This drives us to replace in Eq~\eqref{eq:cnnorm} the variables $\{\kappa_i(L)\}_{i=1}^k$ with the convolution eigenvectors pre-learned from some reference tensors that are somehow relevant to the target $L_0$. Since the pre-learned convolution eigenvectors are fixed during inference, the SVD step is avoided and the computationally time is dramatically reduced especially when GPUs are used. Besides its superiority in computational efficiency, ICNNM can also surpass CNNM in terms of recovery performance, because the pre-learned convolution eigenvectors may contain prior knowledge helpful for identifying $L_0$. To summarize, the contributions of this paper mainly include:
\begin{itemize}
\item[$\bullet$] We reformulate the optimization objective of CNNM using pre-learned convolution eigenvectors and propose a novel method termed ICNNM for addressing the TCAS problem. Extensive experiments show that our ICNNM is better than CNNM, in terms of both computational efficiency and recovery performance.\
\item[$\bullet$] We theoretically analyze the conditions under which ICNNM can succeed in recovering the target $L_0$ from a subset of its entries sampled in an arbitrary manner. Remarkably, our analysis helps to explain why ICNNM can surpass CNNM in regard to recovery performance. 
\end{itemize}   

\begin{figure*}[htbp]
    \centering
    \captionsetup{font=small} 
    \begin{subfigure}[b]{0.16\linewidth} 
        \centering
        \includegraphics[width=\linewidth]{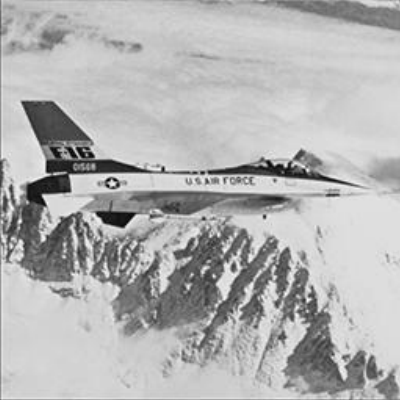}
        \vspace{0.25cm} 
        \caption{Airplane}
        \label{fig:gt}
    \end{subfigure}%
    \hspace{0.01\linewidth} 
    \begin{subfigure}[b]{0.16\linewidth}
        \centering
        \includegraphics[width=\linewidth]{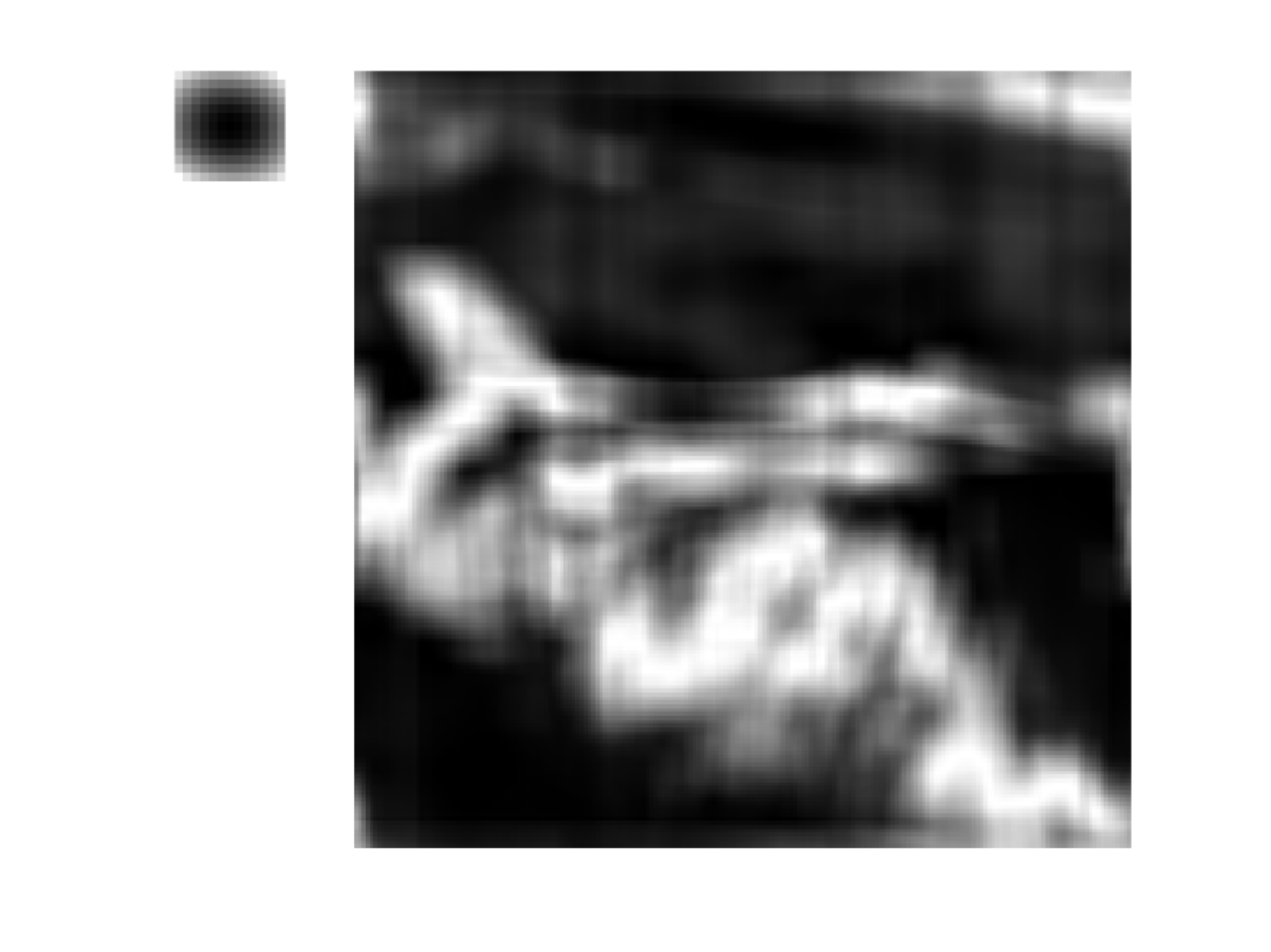} \\
        \vspace{0.1cm} 
        \includegraphics[width=\linewidth]{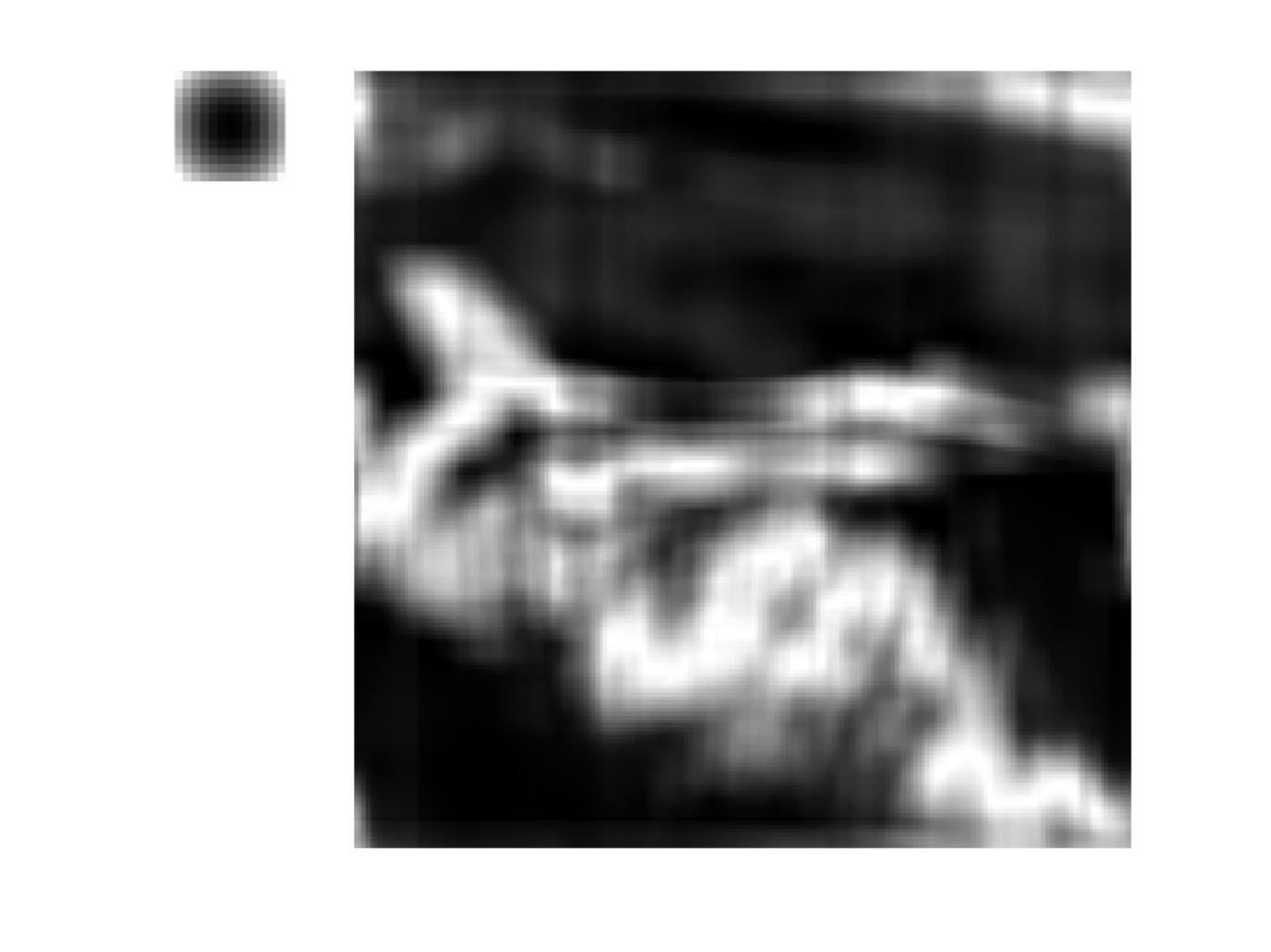}
        \caption{$i=1$}
    \end{subfigure}%
    \hspace{0.01\linewidth}
    \begin{subfigure}[b]{0.16\linewidth}
        \centering
        \includegraphics[width=\linewidth]{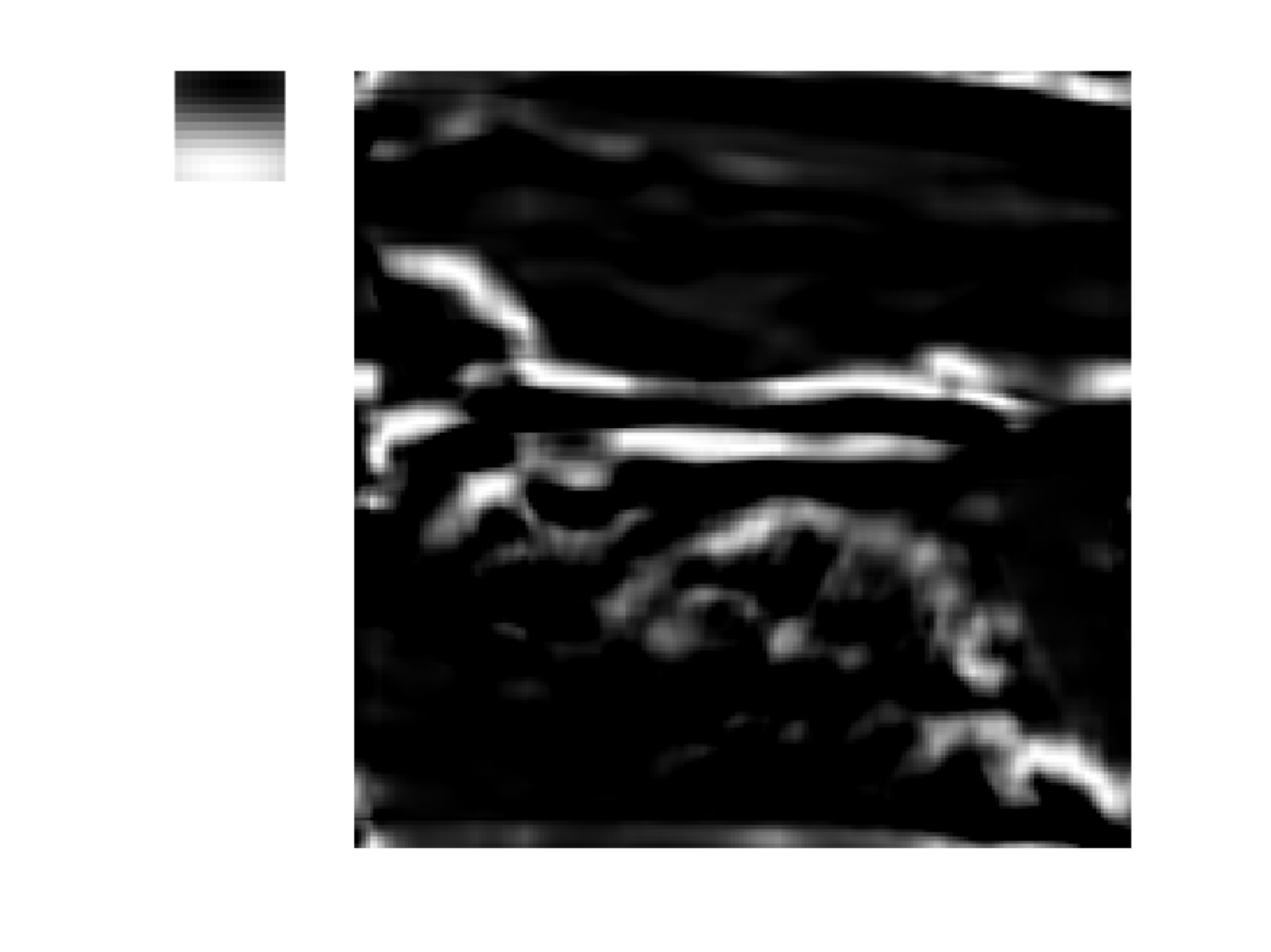} \\
        \vspace{0.1cm}
        \includegraphics[width=\linewidth]{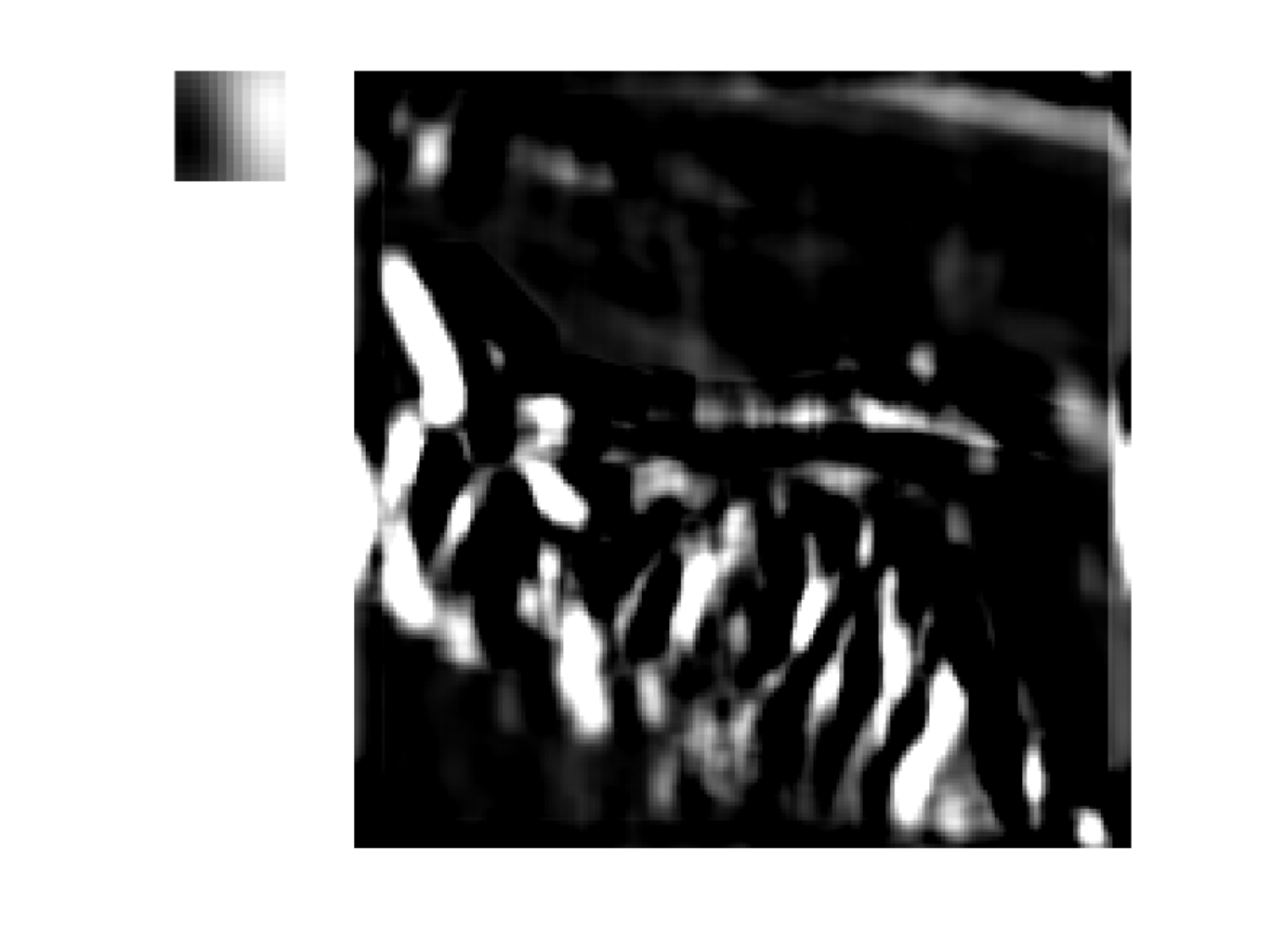}
        \caption{$i=2$}
    \end{subfigure}%
    \hspace{0.01\linewidth}
    \begin{subfigure}[b]{0.16\linewidth}
        \centering
        \includegraphics[width=\linewidth]{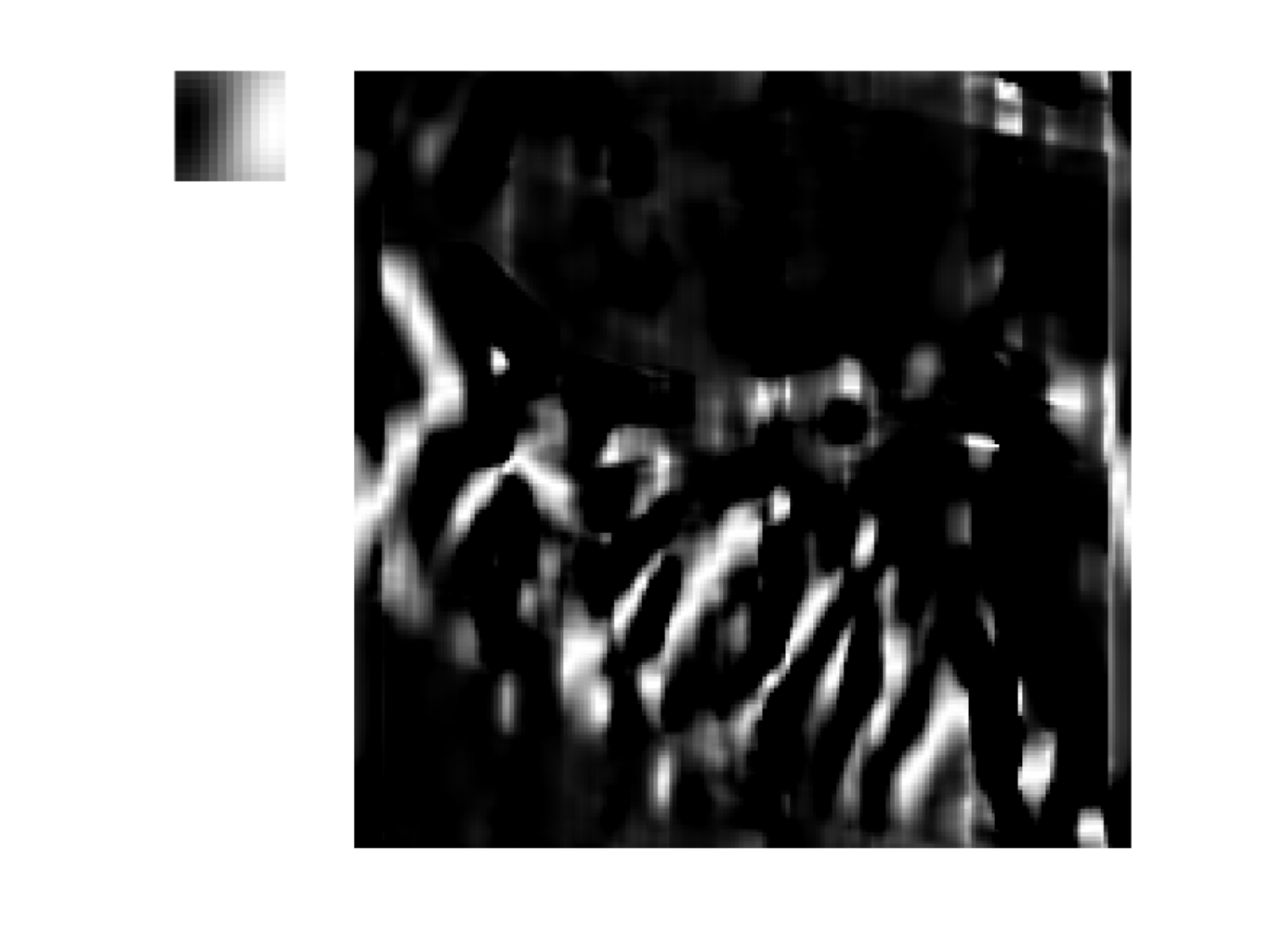} \\
        \vspace{0.1cm}
        \includegraphics[width=\linewidth]{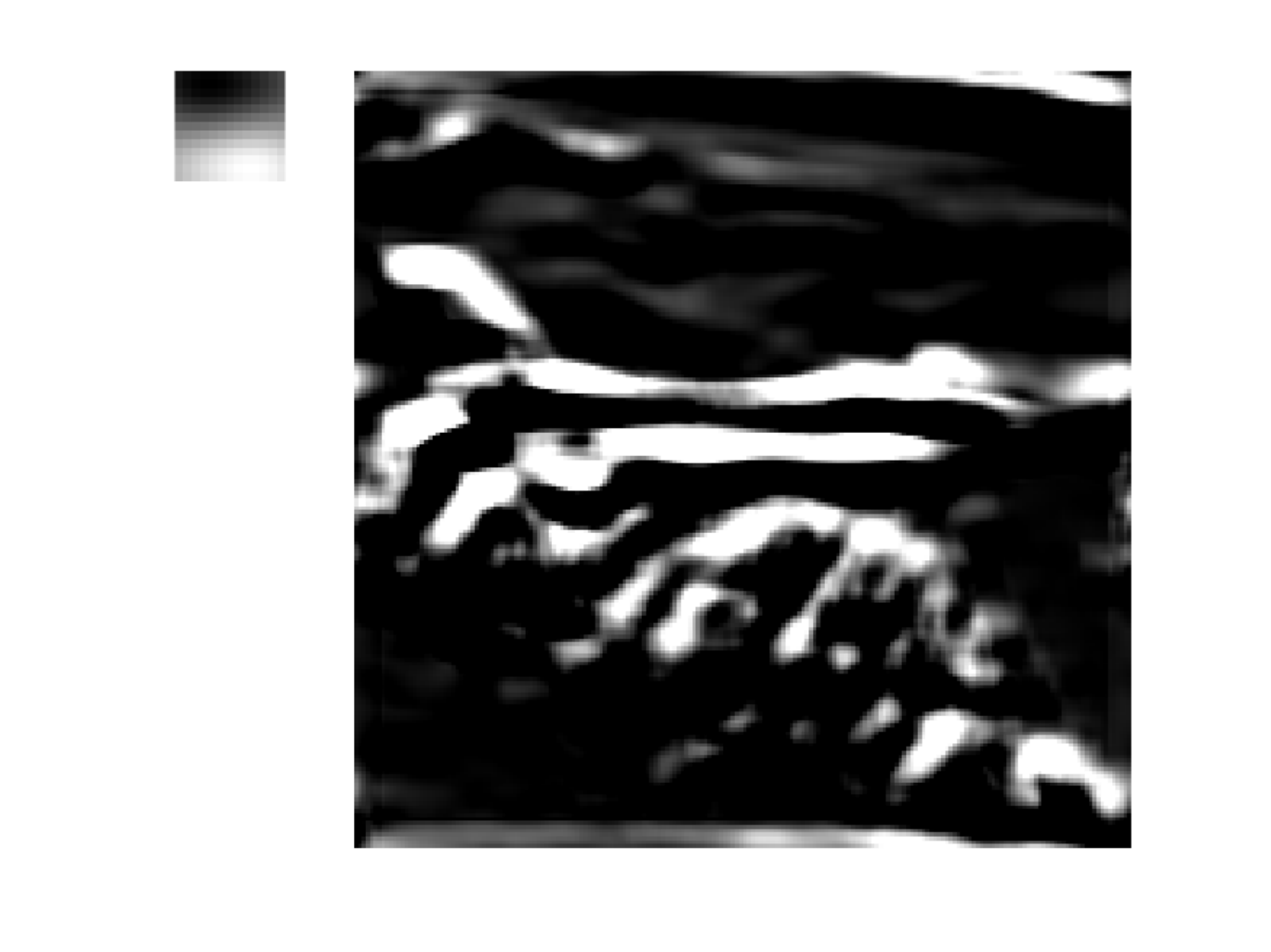}
        \caption{$i=3$}
    \end{subfigure}%
    \hspace{0.01\linewidth}
    \begin{subfigure}[b]{0.16\linewidth}
        \centering
        \includegraphics[width=\linewidth]{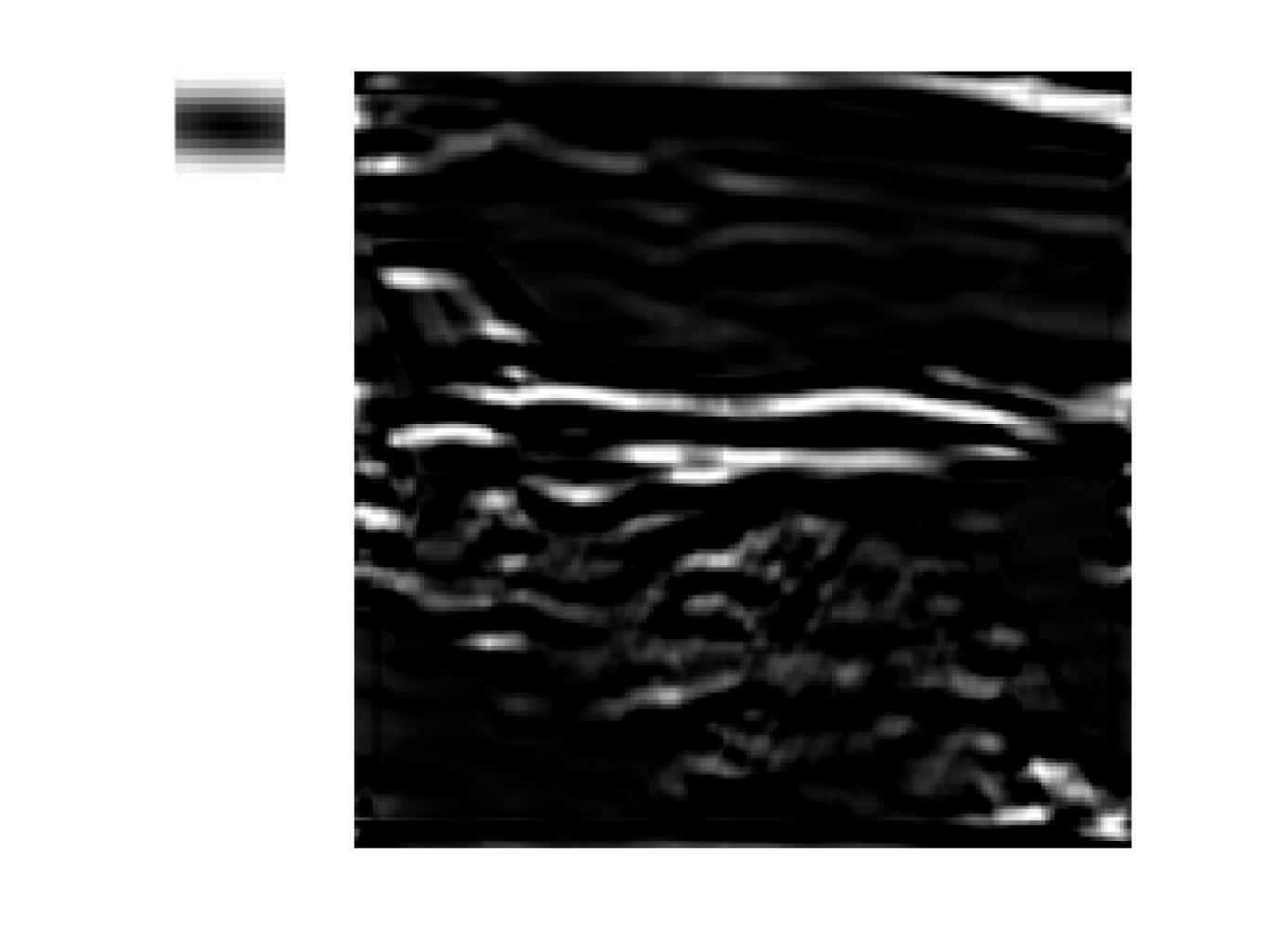} \\
        \vspace{0.1cm}
        \includegraphics[width=\linewidth]{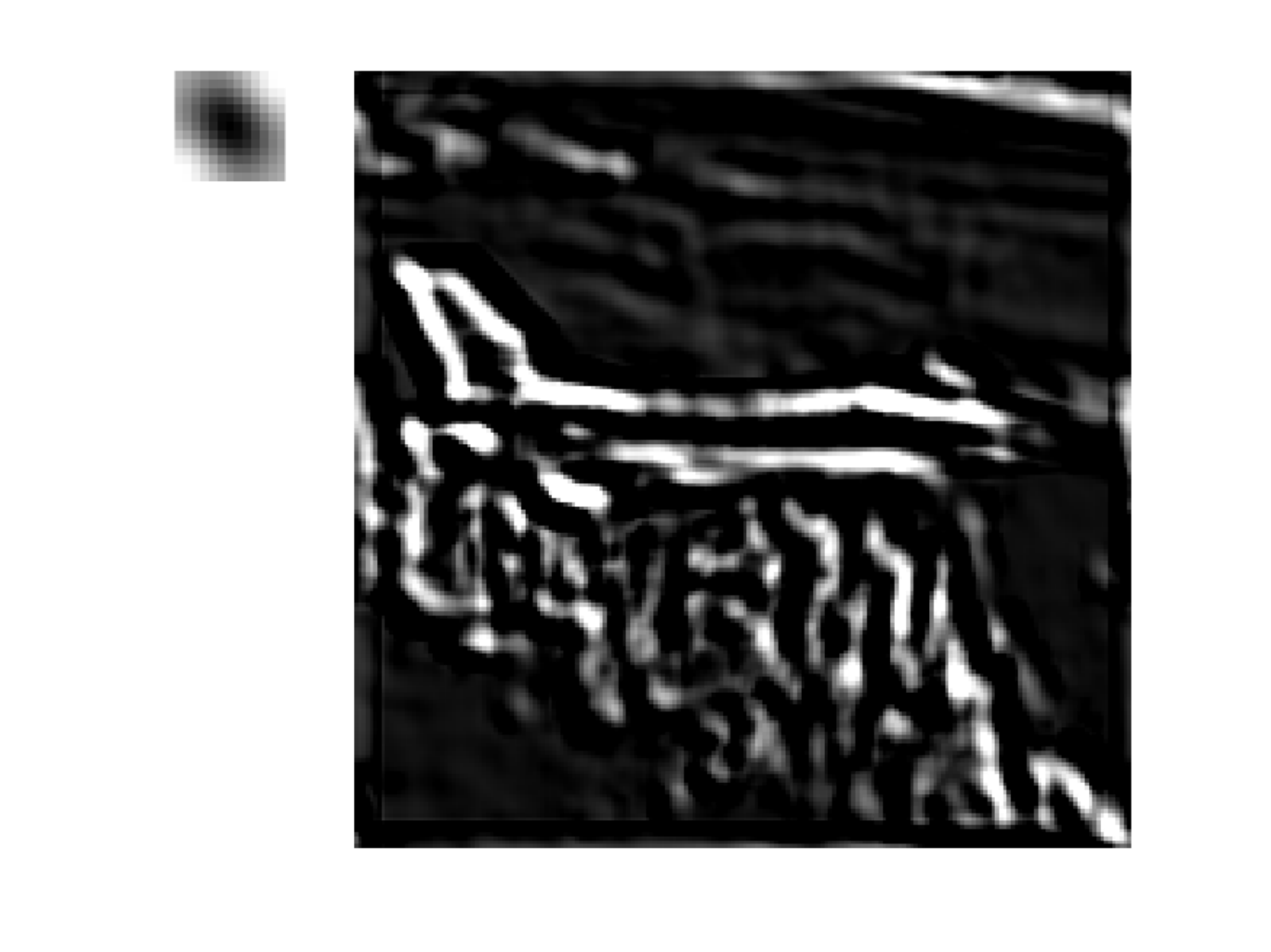}
        \caption{$i=4$}
    \end{subfigure}%
    
    \vspace{0.2cm} 
    \caption{Visualization of the first four convolution eigenvectors $\kappa_i$ and filtered signals $X \star \kappa_i$ (top row), with corresponding inductive convolution eigenvectors $\bar{\kappa}_i$ and filtered signals $X \star \bar{\kappa}_i$ (bottom row). Inductive eigenvectors were learned using seven Set8 images. All eigenvectors were normalized to [0,1] and filtered signals to max=1 for visualization. Note the similarity between image-specific and inductive eigenvectors, demonstrating transitivity across images.}
    \label{fig:Supplementary1}
\end{figure*}

\section{Notations and Preliminaries}
\subsection{Summary of Main Notations}
Capital letters, such as $L$, are used to represent tensors of order $n$ with $n\geq1$, while lowercase letters represent integers and Greek letters usually represent important parameters. It is worth noting that $m_1 \times \cdots \times m_n$ and $k_1 \times \cdots \times k_n$ are reserved to denote respectively the tensor dimension and the kernel size, with $m=\prod_{i=1}^{n}m_i$ and $k=\prod_{i=1}^{n}k_i$.\par

For an order-$n$ tensor $L$, $[L]_{i_1,\cdots{},i_n}$ is its $(i_1,\cdots,i_n)$th entry, and its Frobenius norm is given by $\|L\|_F=\sqrt{\sum_{i_1,\cdots{},i_n}([L]_{i_1,\cdots{},i_n})^2}$. Four matrix norms are used frequently: For a matrix $M$, $\|M\|_*$ represents its nuclear norm (i.e., the sum of singular values), $\|M\|_{2,1}$ is the sum of $\ell_{2}$ norms of its columns, $\|M\|_{2,\infty}$ is the maximum $\ell_{2}$ norm among its columns, and $\|M\|$ denotes its operator norm (i.e., the largest singular value).  

Specially, $\Omega \subset \left\{1,\cdots,m_1\right\} \times \cdots \times \left\{1,\cdots,m_n\right\}$ denotes the sampling set introduced earlier, and $\mathcal{P}_{\Omega}$ denotes the orthogonal projection onto $\Omega$; namely,
\begin{align*}
    [\mathcal{P}_{\Omega}(L)]_{i_1,\cdots,i_n} = \left\{\begin{array}{cc}[L]_{i_1,\cdots,i_n}, & \text{if }(i_1,\cdots,i_n)\in\Omega,\\
    0, & \text{otherwise.}\end{array}\right.
\end{align*}
The symbol $|\cdot|$ is reserved to denote the cardinality of a set, e.g., $|\Omega|$ stands for the number of elements in $\Omega$. Similarly, the symbol $\mathcal{P}_{\Omega}^\perp$ denotes the orthogonal projection onto the complement space of $\Omega$; that is, $\mathcal{P}_{\Omega} + \mathcal{P}_{\Omega}^\perp = \mathcal{I}$, where $\mathcal{I}$ is the identity operator.

\subsection{Convolution Matrix and Convolution Sampling Set}\label{sec:Convolution Matrix}
The concepts of convolution matrix and convolution sampling set are detailed in~\cite{liu2022recovery}. For the ease of reading, we shall briefly introduce them in this subsection. 

Let $L \in \mathbb{R}^{m_1\times \cdots \times m_n}$ and $X \in \mathbb{R}^{k_1\times\cdots\times k_n}$ be two tensors with $k_i \le m_i,\forall 1 \le i \le n$. Denote by $L \star X\in \mathbb{R}^{m_1\times \cdots \times m_n}$ the circular convolution of $L$ and $X$. Then the $(i_1,\cdots,i_n)$th entry of $L \star X$ is given by
\begin{equation*}
  [L \star X]_{i_1,\cdots,i_n} =\sum\limits_{s_1,\cdots,s_n}[L]_{i_1-s_1,\cdots,i_n-s_n}[X]_{s_1,\cdots,s_n},
\end{equation*}
where it is assumed that $i_j-s_j\equiv{}i_j-s_j+m_j$ for $i_j\leq{}s_j.$ Let $\mathrm{vec}(\cdot)$ be the vectorization operator that converts tensors into long vectors. Then we have
\begin{equation}
  \label{eq:ak}
  \mathrm{vec}(L \star X)=\mathcal{A}_k(L)\mathrm{vec}(X),\forall L,X,
\end{equation}
where $\mathcal{A}_k(L) \in \mathbb{R}^{m \times k}$ with $m=\prod_{i=1}^{n}m_i$ and $k=\prod_{i=1}^{n}k_i$ is the convolution matrix of the tensor $L$.

For a given sampling set $\Omega\subset\left\{1, \cdots, m_1\right\} \times \cdots \times \left\{1, \cdots, m_n\right\}$, its convolution sampling set, denoted as $\Omega_{\mathcal{A}}\subset\left\{1, \cdots, m\right\} \times \left\{1, \cdots, k\right\}$, can be actually regarded as the convolution matrix of $\Omega$. To be more precise, define the mask tensor of $\Omega$ as $\Theta_{\Omega}$, with
\begin{align*}
[\Theta_\Omega]_{i_1,\cdots{},i_n}=\left\{\begin{array}{cc}
1,&\text{if }(i_1,\cdots,i_n)\in\Omega,\\
0, &\text{otherwise.}
\end{array}\right.
\end{align*}
Then the convolution sampling set $\Omega_{\mathcal{A}}$ of $\Omega$ is given by
\begin{align}\label{eq:convset}
\Theta_{\Omega_{\mathcal{A}}} = \mathcal{A}_k(\Theta_{\Omega})  \quad \text{and} \quad \Omega_{\mathcal{A}} = \textrm{supp} (\Theta_{\Omega_{\mathcal{A}}}),
\end{align}
where $\Theta_{\Omega_{\mathcal{A}}}$ is the mask matrix of $\Omega_{\mathcal{A}}$, and $\mathrm{supp}(\cdot)$ denotes the support set of a tensor. Interestingly, the convolution sampling set has a nice property: Regardless of how the observed entries are sampled, each column of $\Theta_{\Omega_{\mathcal{A}}}$ has exactly $\rho_0m$ ones and $(1-\rho_0)m$ zeros, and each row of $\Theta_{\Omega_{\mathcal{A}}}$ has at most $(1-\rho_0)m$ zeros, where $\rho_0=|\Omega|/m$ is the sampling rate. This is indeed the primary reason of why CNNM can handle arbitrary sampling patterns, as revealed by the theories in~\cite{8815944}. 
\subsection{Convolution Eigenvectors}\label{sec:ConvEg}
The concept of convolution eigenvalues and eigenvectors was first introduced in~\cite{liu2014blind}, under the context of matrices, and was generalized to tensors of any order $n\geq1$ by~\cite{liu2022recovery}. Here, we shall briefly present them for the ease of reading. 

Let the kernel size be $k_1\times \cdots \times k_n$. Then for a tensor $L \in \mathbb{R}^{m_1\times \cdots \times m_n}$ with $m_j \geq k_j,\forall{}1\leq{j}\leq{n}$, its first convolution eigenvalue, denoted as $\sigma_1(L)$, is given by
\begin{align*}
\sigma_1(L)&=\max_{X\in\mathbb{R}^{k_1\times\cdots\times{}k_n}}\|L\star{}X\|_F,\textrm{ s.t. } \|X\|_F=1.
\end{align*}
The optimal solution to the above problem is called the first convolution eigenvector, denoted as $\kappa_1(L)\in\mathbb{R}^{k_1\times\cdots\times{}k_n}$. Similarly, the $i$th ($2\le i\le k$) convolution eigenvalue of $L$ is defined as
\begin{eqnarray*}&\sigma_i(L)=\max_{X\in\mathbb{R}^{k_1\times\cdots\times{}k_n}}\|L\star{}X\|_F, \\\nonumber
 &\textrm{s.t.} \|X\|_F=1, \langle{}X,\kappa_j(L)\rangle=0,\forall{}j<i,
\end{eqnarray*}
where $\langle\cdot,\cdot\rangle$ denotes the inner product between two tensors. The optimal solution to above problem is the $i$th convolution eigenvector, denoted as $\kappa_i(L)\in\mathbb{R}^{k_1\times\cdots\times{}k_n}$.

According to~\eqref{eq:ak}, the convolution eigenvalues and eigenvectors of a tensor $L$ exactly correspond to the singular values and right singular vectors of the convolution matrix $\mathcal{A}_k(L)$, respectively. Hence, the nuclear norm of the convolution matrix can be reformulated as in~\eqref{eq:cnnorm}. In addition, the number of nonzero convolution eigenvalues of a tensor, called the \emph{convolution rank}, is simply the rank of the convolution matrix of the tensor.

\subsection{Convolution Coherence}
As explained in~\cite{liu2022recovery}, the so-called \textit{convolution coherence} of a tensor $L$ is exactly the standard \emph{coherence}~\cite{4797640} of the convolution matrix $\mathcal{A}_k(L)$. Let the skinny SVD of a rank-$r$ matrix $B\in\mathbb{R}^{p\times{}q}$ be $B=U\Sigma{}V^T$, where $\Sigma\in\mathbb{R}^{r\times{}r}$. Then there are two coherence parameters, $\mu_1$ and $\mu_2$, for characterizing $B$:
\begin{align}\label{eq:coherence}
&\mu_1(B) = \frac{p}{r}\max_{1\leq{}i\leq{}p}\|[U]_{i,:}\|_F^2,\\\nonumber
&\mu_2(B) = \frac{q}{r}\max_{1\leq{}i\leq{}q}\|[V]_{i,:}\|_F^2,
\end{align}
where $[\cdot]_{i,:}$ is the $i$th row of a matrix. It can be verified that $1\leq\mu_1(M)\leq{}p$ and $1\leq\mu_2(M)\leq{}q$. 

With the above notations, the two convolution coherence parameters, denoted respectively as $\mu_k^{(1)}$ and $\mu_k^{(2)}$, of a tensor $L\in\mathbb{R}^{m_1\times\cdots\times{}m_n}$ are given by
\begin{align}\label{eq:ccoherence}
\mu_k^{(1)}(L) =\mu_1(\mathcal{A}_k(L))\text{ and } \mu_k^{(2)}(L) =\mu_2(\mathcal{A}_k(L)).
\end{align}
As proven in~\cite{liu2022recovery}, the range of convolution coherence can be determined as
\begin{align}\label{eq:ccoherence:range}
1\leq\mu_k^{(1)}(L)\leq\frac{m}{k}\tau^2\text{ and } 1\leq\mu_k^{(2)}(L)\leq\tau^2,
\end{align}
where $\tau$ is the condition number of $\mathcal{A}_k(L)\in\mathbb{R}^{m\times{k}}$.

\section{Methods and Analysis}\label{sec:methods}
Since the convolution eigenvectors, which are essentially some kind of feature filters, can be shared across different tensors, it would be straightforward to replace in~\eqref{eq:cnnm} the unknown convolution eigenvectors of the optimization variable with some pre-learned convolution eigenvectors, resulting in a new convex program termed ICNNM:
\begin{equation}
  \label{eq:InductiveCNNM}
  \min_L \left\| \mathcal{A}_k(L)K \right\|_{2,1}, \quad \text{s.t.} \quad \mathcal{P}_{\Omega}(L) = \mathcal{P}_{\Omega}(L_0),
\end{equation}
where $K=[\mathrm{vec}(\bar{\kappa}_1),\cdots,\mathrm{vec}(\bar{\kappa}_k)]\in\mathbb{R}^{k\times{k}}$ is orthogonal, and $\{\bar{\kappa}_i\in\mathbb{R}^{k_1\times\cdots\times{}k_n}\}_{i=1}^k$ is a set of $k$ convolution eigenvectors learned from some reference tensors in advance. 

In the rest of this section, we shall first analyze ICNNM theoretically, then present methods for learning the convolution eigenvectors $\{\bar{\kappa}_i\in\mathbb{R}^{k_1\times\cdots\times{}k_n}\}_{i=1}^k$ as well as algorithms for solving the raised optimization problem.
\subsection{Recovery Conditions}\label{sec:conditions}
Whenever a set of pre-learned convolution eigenvectors $\{\bar{\kappa}_i\}_{i=1}^k$ are used to supplant the ``originals'', naturally, the recovery performance of ICNNM will depend on how $\{\bar{\kappa}_i\}_{i=1}^k$ is related to the target tensor $L_0$. In fact, their relationship is mainly reflected by two quantities, \textit{relative convolution rank} and \textit{spectral correlation coefficient}, which are defined as follows. 
\begin{definition}
Let $L_0\in\mathbb{R}^{m_1\times\cdots\times{}m_n}$ be a tensor of order $n$, and let $\{\bar{\kappa}_i\in\mathbb{R}^{k_1\times\cdots\times{}k_n}\}_{i=1}^k$ be a set of convolution eigenvectors, where $K=[\mathrm{vec}(\bar{\kappa}_1),\cdots,\mathrm{vec}(\bar{\kappa}_k)]$ is orthogonal. Denote by $\mathcal{I}$ the column support of $\mathcal{A}_k(L_0)K$. Then the \textit{relative convolution eigenvalues} of $L_0$ with respect to $K$ are defined as 
\begin{align*}
\sigma^K_i(L_0) = \|L_0\star\bar{\kappa}_i\|_F, \forall{}1\leq{i}\leq{k}.
\end{align*}
Accordingly, the \textit{relative convolution rank}, denoted by $r_K(\cdot)$,  is defined as the number of nonzero relative convolution eigenvalues. That is,
\begin{align}\label{eq:rkL0}
r_K(L_0) = \|\mathcal{A}_k(L_0)K\|_{2,0} =|\mathcal{I}|,
\end{align}
where $\|\cdot\|_{2,0}$ is the $\ell_{2,0}$ quasi norm that counts the number of nonzero columns of a matrix. And the \textit{spectral correlation coefficient} of $L_0$ with respect to $K$ is defined as
\begin{align}\label{eq:alphaK}
  \alpha_K(L_0)=\|\mathcal{A}_k(L_0)KD\|,
\end{align}
where $\|\cdot\|$ is the operator norm, and $D\in\mathbb{R}^{k\times{}k}$ is a diagonal matrix with the diagonal entries being
\begin{align*}
  [D]_{i,i}=\left\{\begin{array}{cc}
\frac{1}{\|L_0\star\bar{\kappa}_i\|_F},  & \textrm{if } i \in \mathcal{I}, \\
0,   & \textrm{otherwise.}
\end{array}\right.
\end{align*}
\end{definition}

In addition, the recovery performance of ICNNM also depends on the first coherence parameter of $K_{\mathcal{I}}\in\mathbb{R}^{k\times{r_K(L_0)}}$, where $K_{\mathcal{I}}$ is the submatrix of $K$ obtained by selecting the columns with indices $\mathcal{I}$. For convenience, $K_{\mathcal{I}}$ is referred to as the \textit{active submatrix} of $K$, and its coherence is specially denoted as $\mu_{K}^{\mathcal{I}}$, i.e., $\mu_{K}^{\mathcal{I}} =\mu_1(K_{\mathcal{I}})$. 

For our ICNNM to succeed in solving TCAS, as will be shown later, $\mu_{K}^{\mathcal{I}}r_K(L_0)$ and $\alpha_K(L_0)$ cannot be too large. Through some simple manipulations, it can be proven that $1\leq\alpha_K(L_0)\leq{}\sqrt{r_K(L_0)}$ and $r_K(L_0)\geq{}r_0$, where $r_0$ is the convolution rank of the target $L_0$. In the ideal case when the pre-learned convolution eigenvectors strictly coincide with the convolution eigenvectors of $L_0$, we have  $r_K(L_0)=r_0$, $\alpha_K(L_0)=1$ and $\mu_{K}^{\mathcal{I}}=\mu_k^{(2)}(L_0)$, where $\mu_k^{(2)}(L_0)$ is the second convolution coherence of $L_0$. 
\subsection{Main Results}
The following theorem guarantees that the ICNNM program~\eqref{eq:InductiveCNNM} can exactly recover the target $L_0$, as long as the sampling rate $\rho_0=|\Omega|/m$ exceeds certain threshold. 
\begin{theorem}[Noiseless]~\label{thm:noiseless} Let $L_0 \in \mathbb{R}^{m_1 \times \cdots \times m_n}$ be the target tensor and $\Omega \subset \left\{1,\cdots,m_1\right\} \times \cdots \times \left\{1,\cdots,m_n\right\}$ be the sampling set. Suppose that $\{\bar{\kappa}_i\in\mathbb{R}^{k_1\times\cdots\times{}k_n}\}_{i=1}^k$ is a given set of convolution eigenvectors and $K=[\mathrm{vec}(\bar{\kappa}_1),\cdots,\mathrm{vec}(\bar{\kappa}_k)]$
is orthogonal. Denote by $r_K(L_0)$, $\alpha_K(L_0)$ and $\mu_{K}^{\mathcal{I}}$ the relative convolution rank of $L_0$ with respect to $K$, the spectral correlation coefficient of $L_0$ with respect to $K$ and the coherence of the active submatrix of $K$, respectively. Then $L=L_0$ is the unique minimizer to the ICNNM problem~\eqref{eq:InductiveCNNM}, as long as
\begin{equation*}
  \rho_0 > 1-\frac{k}{(1+ \alpha_K^2(L_0))\mu_{K}^{\mathcal{I}}r_K(L_0)m},
\end{equation*}
where $m=\Pi_{i=1}^nm_i$ and $k=\Pi_{i=1}^nk_i$.
\end{theorem}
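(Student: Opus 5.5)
Since $K$ is orthogonal, write $M(L)=\mathcal{A}_k(L)K\in\mathbb{R}^{m\times k}$. The map $L\mapsto\mathcal{A}_k(L)$ is injective, so \eqref{eq:InductiveCNNM} becomes an $\ell_{2,1}$-minimization over matrices in the range of $\mathcal{A}_k$. We will certify optimality of $L_0$ by a dual certificate, in the spirit of the CNNM analysis in~\cite{liu2022recovery,8815944}, with the nuclear norm replaced by the column-sparsity norm $\|\cdot\|_{2,1}$.

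The subdifferential of $\|\cdot\|_{2,1}$ at $M_0=\mathcal{A}_k(L_0)K$ is the set of matrices $M_0D+W$, where $W$ is supported on columns outside $\mathcal{I}$ and $\|W\|_{2,\infty}\le1$. Take a feasible perturbation $L_0+\Delta$ with $\mathcal{P}_\Omega(\Delta)=0$ and set $H=\mathcal{A}_k(\Delta)K$. Then
\begin{align*}
\|M_0+H\|_{2,1}-\|M_0\|_{2,1}\ge\langle M_0D,H_{\mathcal{I}}\rangle+\|H_{\mathcal{I}^c}\|_{2,1}.
\end{align*}
Since $\mathcal{P}_\Omega(\Delta)=0$, the matrix $\mathcal{A}_k(\Delta)$ vanishes on $\Omega_{\mathcal{A}}$, i.e.\ $\mathcal{A}_k(\Delta)=\mathcal{P}_{\Omega_{\mathcal{A}}}^\perp\mathcal{A}_k(\Delta)$. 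It therefore suffices to construct a certificate $Y=\mathcal{P}_{\Omega_{\mathcal{A}}}(Y)$ (so that $\langle Y,\mathcal{A}_k(\Delta)\rangle=0$) satisfying three conditions:
\begin{enumerate}
\item $YK_{\mathcal{I}}=M_0D_{\mathcal{I}}$;
\item $\|YK_{\mathcal{I}^c}\|_{2,\infty}<1$;
\item $\Delta=0$ is the only feasible perturbation with $H_{\mathcal{I}^c}=0$.
\end{enumerate}
Given these, the change in objective is at least $\|H_{\mathcal{I}^c}\|_{2,1}-\langle YK_{\mathcal{I}^c},H_{\mathcal{I}^c}\rangle>0$ unless $H_{\mathcal{I}^c}=0$, and condition (iii) rules that case out.

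Condition (iii) says that no nonzero matrix vanishing on $\Omega_{\mathcal{A}}$ has row space inside $\mathrm{span}(K_{\mathcal{I}})$. We will prove it via the operator $\mathcal{P}_{\Omega_{\mathcal{A}}}^\perp\mathcal{P}_T$, where $\mathcal{P}_T(Z)=ZK_{\mathcal{I}}K_{\mathcal{I}}^T$. Each row of $\Theta_{\Omega_{\mathcal{A}}}$ has at most $(1-\rho_0)m$ zeros, and the rows of $K_{\mathcal{I}}$ satisfy $\|[K_{\mathcal{I}}]_{j,:}\|^2\le\mu_K^{\mathcal{I}}r_K/k$. A Cauchy--Schwarz argument on each row of $Z$, using the Frobenius norm, then gives
\begin{align*}
\|\mathcal{P}_{\Omega_{\mathcal{A}}}^\perp\mathcal{P}_T\|^2\le\frac{(1-\rho_0)m\,\mu_K^{\mathcal{I}}r_K}{k}.
\end{align*}
The exact way the missing-entry pattern interacts with the row-wise coherence needs the same bookkeeping as in~\cite{8815944}; I expect this bound to be $<1$ under the hypothesis. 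Write $\gamma=(1-\rho_0)m\mu_K^{\mathcal{I}}r_K/k$. The hypothesis is exactly $\gamma(1+\alpha_K^2)<1$, so in particular $\gamma<1$, and then $\mathcal{P}_{\Omega_{\mathcal{A}}}^\perp\mathcal{P}_T$ is a strict contraction. This gives (iii).

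For conditions (i)--(ii), take the least-squares certificate
\begin{align*}
Y=\mathcal{P}_{\Omega_{\mathcal{A}}}\mathcal{P}_T\bigl(\mathcal{P}_T\mathcal{P}_{\Omega_{\mathcal{A}}}\mathcal{P}_T\bigr)^{-1}(M_0DK_{\mathcal{I}}^T),
\end{align*}
which exists because $\mathcal{P}_T\mathcal{P}_{\Omega_{\mathcal{A}}}\mathcal{P}_T\succeq(1-\gamma)\mathcal{P}_T$. Expanding in a Neumann series, $Y=E-\mathcal{P}_{\Omega_{\mathcal{A}}}^\perp\mathcal{P}_T\sum_{j\ge0}(\mathcal{P}_T\mathcal{P}_{\Omega_{\mathcal{A}}}^\perp\mathcal{P}_T)^j E$ with $E=M_0DK_{\mathcal{I}}^T$. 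This $Y$ satisfies (i) by construction, and $YK_{\mathcal{I}^c}=-\mathcal{P}_{\Omega_{\mathcal{A}}}^\perp(\cdot)K_{\mathcal{I}^c}$. Each column of $YK_{\mathcal{I}^c}$ is bounded in $\ell_2$ by the operator norm, giving
\begin{align*}
\|YK_{\mathcal{I}^c}\|_{2,\infty}\le\frac{\sqrt{\gamma}\,\|E\|}{\sqrt{1-\gamma}}=\sqrt{\frac{\gamma\,\alpha_K^2}{1-\gamma}},
\end{align*}
using $\|E\|=\|M_0D\|=\alpha_K$. This is $<1$ exactly when $\gamma(1+\alpha_K^2)<1$, which matches the threshold in Theorem~\ref{thm:noiseless}.

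The main obstacle is the operator-norm estimate $\|\mathcal{P}_{\Omega_{\mathcal{A}}}^\perp\mathcal{P}_T\|^2\le\gamma$. The estimate must use only the deterministic structure of $\Omega_{\mathcal{A}}$, since sampling is arbitrary, so the row-zero count of $\Theta_{\Omega_{\mathcal{A}}}$ and the coherence $\mu_K^{\mathcal{I}}$ have to combine exactly. A second point is that the bound in (ii) must be carried out in the column-wise norm rather than the Frobenius norm, so that the constant comes out as $\alpha_K^2$ rather than something depending on $r_K$.
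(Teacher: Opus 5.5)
Your architecture is the paper's. Conditions (i)--(ii) are the certificate conditions of Lemma~\ref{lemma:noiseless}. Your (iii) follows from its condition $\mathcal{P}_{\Omega_A}^\perp\cap\mathcal{P}_{K_\mathcal{I}}=\{0\}$, which is what your contraction argument proves. Your $Y$ is the certificate \eqref{eq:prooffinaly}, and the threshold reduces to the same inequality $\gamma(1+\alpha_K^2)<1$. Where the paper imports Theorem~3.4 and Lemma~5.10 of~\cite{8815944}, you prove $\|\mathcal{P}_{\Omega_{\mathcal{A}}}^\perp\mathcal{P}_T\|^2\le\gamma$ directly. Your Cauchy--Schwarz sketch for this is complete: each row of $\Theta_{\Omega_{\mathcal{A}}}$ has at most $(1-\rho_0)m$ zeros, each row of $K_{\mathcal{I}}$ has squared norm at most $\mu_K^{\mathcal{I}}r_K/k$, and $\|zK_{\mathcal{I}}\|\le\|z\|$. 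So the step you call the main obstacle is fine, as are the reduction to (i)--(iii) and the proof of (iii). The genuine gap is the bound for (ii).

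The contraction you prove is for the operator norm induced by the Frobenius norm, but in (ii) you multiply it by the spectral norm $\|E\|=\alpha_K$. That inference is not valid here, because all the operators involved act row by row. Let $Q=K_{\mathcal{I}}K_{\mathcal{I}}^T$, let $P_i$ be the diagonal $0/1$ matrix given by row $i$ of $\Theta_{\Omega_{\mathcal{A}}}$, and let $(QP_iQ)^{+}$ be the inverse on the range of $Q$. Then row $i$ of $YK_{\mathcal{I}^c}$ is $e_iR_i$, where $e_i$ is row $i$ of $E$ and $R_i=(QP_iQ)^{+}QP_iK_{\mathcal{I}^c}$. Your estimate controls only $\max_i\|R_i\|\le\sqrt{\gamma/(1-\gamma)}$.

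Column $j$ of $YK_{\mathcal{I}^c}$ has squared norm $\sum_i(e_ig_{ij})^2$, with $g_{ij}=R_if_j$ and $f_j$ the $j$th standard basis vector. If $g_{ij}=g_j$ did not depend on $i$, this would be $\|Eg_j\|^2\le\alpha_K^2\|g_j\|^2$, which is your bound. But the $P_i$ come from different windows of $\Theta_\Omega$, so the $g_{ij}$ vary with $i$, and nothing in your argument stops them from aligning with the rows $e_i$. The general bound is $\max_i\|R_i\|^2\sum_i\|e_i\|^2=\max_i\|R_i\|^2\,r_K$, since $\|E\|_F^2=r_K$.

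As written, then, your argument proves exact recovery only under $\gamma(1+r_K)<1$. This is weaker than the theorem, because $\alpha_K^2\le r_K$. Reaching $\alpha_K$ needs an extra ingredient that controls how the row-dependent masks interact with $E$. The obvious alternative is to bound the sum by $\max_i\|e_i\|^2\sum_i\|g_{ij}\|^2$ and use that each column of $\Theta_{\Omega_{\mathcal{A}}}$ has exactly $(1-\rho_0)m$ zeros. That gives a valid condition, but it involves the largest row norm of $\mathcal{A}_k(L_0)K_{\mathcal{I}}D$, a coherence-type quantity that is not in the statement.

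You flagged this in your last sentence without resolving it. The paper's chain \eqref{eq:proofalphanorm} takes the same step: it pairs the operator norm from Lemma~5.10 of~\cite{8815944} with the spectral norm $\|\mathcal{A}_k(L_0)K_{\mathcal{I}}DK_{\mathcal{I}}^T\|$ without justifying that pairing, so it does not supply the missing piece either.

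Two smaller slips. First, your Neumann display is not $Y$. With $W=(\mathcal{P}_T\mathcal{P}_{\Omega_{\mathcal{A}}}\mathcal{P}_T)^{-1}E$ it equals $E-\mathcal{P}_{\Omega_{\mathcal{A}}}^\perp W$, whereas $Y=W-\mathcal{P}_{\Omega_{\mathcal{A}}}^\perp W$; the $K_{\mathcal{I}^c}$-part is unaffected. Second, a plain Neumann bound gives the factor $\sqrt{\gamma}/(1-\gamma)$. Getting $\sqrt{\gamma/(1-\gamma)}$ requires the identity $B^*B=G-G^2$, where $B=\mathcal{P}_T^\perp\mathcal{P}_{\Omega_{\mathcal{A}}}^\perp\mathcal{P}_T$ and $G=\mathcal{P}_T\mathcal{P}_{\Omega_{\mathcal{A}}}^\perp\mathcal{P}_T$, and you should state it.
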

In the perfect case where the pre-learned convolution eigenvectors $\{\bar{\kappa}_i\}_{i=1}^k$ exactly equal to the convolution eigenvectors of the target $L_0$, the above theorem implies that the ICNNM program~\eqref{eq:InductiveCNNM} is strictly successful provided that 
\begin{align}\label{eq:lowerbound1}
\rho_0>1 - \frac{0.5k}{\mu_k^{(2)}(L_0)r_0m},
\end{align}
where $r_0$ is the convolution rank of $L_0$. In comparison, according to~\cite{liu2022recovery}, the success of the CNNM program~\eqref{eq:cnnm} in identifying $L_0$ requires 
\begin{align}\label{eq:lowerbound2}
\rho_0>1-\frac{0.25k}{\max\left(\mu_k^{(1)}(L_0),\mu_k^{(2)}(L_0)\right)r_0m}, 
\end{align}
the lower bound of which is clearly higher than \eqref{eq:lowerbound1}. Hence, our ICNNM owns potential to outperform CNNM, in the sense of recovery performance. 

In practice, the observations are often contaminated by noises, or $\mathcal{A}_k(L_0)K$ is not column-wisely sparse as equal. In this case, one should relax the equality constraint in~\eqref{eq:InductiveCNNM} and consider instead the following:
\begin{align}\label{eq:icnnm:noisy}
\min_{L} \|\mathcal{A}_k(L)K\|_{2,1},\textrm{ s.t. }\|\mathcal{P}_{\Omega}(L - M)\|_F\leq{}\epsilon,
\end{align}
where $\mathcal{P}_{\Omega}(M)$ is an observed version of $\mathcal{P}_{\Omega}(L_0)$, and $\epsilon>0$ is a parameter. In this case, we have the following theorem to guarantee the recovery performance of ICNNM.
\begin{theorem}[Noisy]\label{thm:noisy}

Adopt the same notations as in Theorem~\ref{thm:noiseless}. Suppose that $\|\mathcal{P}_{\Omega}(M - L_0)\|_F\leq\epsilon$. If
\begin{align}
\rho_0 > 1-\frac{0.64k}{(0.64+\alpha_K^2(L_0))\mu_{K}^{\mathcal{I}}r_K(L_0)m},
\end{align}
then any optimal solution $\bar{L}$ to the ICNNM problem~\eqref{eq:icnnm:noisy} approximately recovers the target $L_0$, in a sense that
\begin{align*}
&\Vert \bar{L}-L_0\Vert_{F} \le(18\sqrt{k}+2)\frac{\alpha_K(L_0)+\sqrt{1+\alpha_K^2(L_0)}}{\alpha_K(L_0)}\epsilon
\end{align*}
\end{theorem}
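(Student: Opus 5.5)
The plan is to lift everything to the matrix domain and adapt the standard dual-certificate and robustness argument for $\ell_{2,1}$ minimization under a column-structured sampling set, in the spirit of the CNNM analysis in~\cite{liu2022recovery,8815944}. Write $X_0=\mathcal{A}_k(L_0)K$ and, for a feasible $\bar L$, set $H=\bar L-L_0$ and $G=\mathcal{A}_k(H)K$. Since $K$ is orthogonal and $\mathcal{A}_k$ is a scaled isometry on tensors, $\|\mathcal{A}_k(H)\|_F^2=k\|H\|_F^2$, so $\|G\|_F=\sqrt{k}\|H\|_F$. The constraint $\|\mathcal{P}_\Omega(\bar L-M)\|_F\le\epsilon$, combined with $\|\mathcal{P}_\Omega(M-L_0)\|_F\le\epsilon$, gives $\|\mathcal{P}_\Omega(H)\|_F\le 2\epsilon$. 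By the construction of $\Omega_{\mathcal{A}}$ in~\eqref{eq:convset}, $\mathcal{P}_{\Omega_{\mathcal{A}}}(\mathcal{A}_k(H))=\mathcal{A}_k(\mathcal{P}_\Omega(H))$, so $\|\mathcal{P}_{\Omega_{\mathcal{A}}}(\mathcal{A}_k(H))\|_F\le 2\sqrt{k}\epsilon$. The task is therefore to control the unobserved part $\mathcal{P}_{\Omega_{\mathcal{A}}}^\perp(\mathcal{A}_k(H))$.

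\textbf{Step 1 (optimality inequality).} Let $\mathcal{P}_{\mathcal{I}}$ keep the columns indexed by $\mathcal{I}$. The subgradient of $\|\cdot\|_{2,1}$ at $X_0$ is $X_0D+W$, where $W$ is supported off $\mathcal{I}$ and $\|W\|_{2,\infty}\le 1$. Minimality gives $\|X_0+G\|_{2,1}\le\|X_0\|_{2,1}$. Choosing $W$ aligned with $\mathcal{P}_{\mathcal{I}}^\perp(G)$ yields
\[
\|\mathcal{P}_{\mathcal{I}}^\perp(G)\|_{2,1}\le -\langle X_0D,G\rangle .
\]

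\textbf{Step 2 (approximate dual certificate via the sampling structure).} Each column of $\Theta_{\Omega_{\mathcal{A}}}$ has exactly $(1-\rho_0)m$ zeros, and each row has at most $(1-\rho_0)m$ zeros. Using this row/column bound together with the coherence of $K_{\mathcal{I}}$, I will bound
\[
\|\mathcal{P}_{\Omega_{\mathcal{A}}}^\perp(\,\cdot\,K_{\mathcal{I}}^T)\|^2\le \frac{(1-\rho_0)m\,\mu_K^{\mathcal{I}}r_K}{k}=:\gamma ,
\]
in the same way as the key lemma of~\cite{8815944}; this is a Schur-test/Cauchy--Schwarz argument where the zeros per row contribute $(1-\rho_0)m$ and $\|[K_{\mathcal{I}}]_{i,:}\|^2\le\mu_K^{\mathcal{I}}r_K/k$. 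Next, decompose
\[
\langle X_0D,G\rangle=\langle \mathcal{A}_k(L_0)K_{\mathcal{I}}D_{\mathcal{I}}K_{\mathcal{I}}^T,\mathcal{A}_k(H)\rangle
\]
into its $\Omega_{\mathcal{A}}$ and $\Omega_{\mathcal{A}}^\perp$ parts. The observed part is bounded by $\alpha_K\cdot 2\sqrt{k}\epsilon$ times a factor involving the column count. For the unobserved part, write $\mathcal{A}_k(H)=GK^T$ and split $G$ into its $\mathcal{I}$ and $\mathcal{I}^\perp$ columns. The $\mathcal{I}^\perp$ piece is controlled by $\alpha_K\sqrt{\gamma}\,\|\mathcal{P}_{\mathcal{I}}^\perp G\|_F$. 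The $\mathcal{I}$ piece I plan to handle by showing that $\|\mathcal{P}_{\mathcal{I}}(G)\|_F$ is itself bounded by the observed part plus $\sqrt{\gamma}\,\|\mathcal{P}_{\mathcal{I}}^\perp G\|_F$; this uses the near-injectivity of $\mathcal{P}_{\Omega_{\mathcal{A}}}$ on $\{YK_{\mathcal{I}}^T\}$ when $\gamma<1$.

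\textbf{Step 3 (closing the loop).} Use $\|\mathcal{P}_{\mathcal{I}}^\perp G\|_{2,1}\ge\|\mathcal{P}_{\mathcal{I}}^\perp G\|_F$ and combine the two inequalities. This gives
\[
\bigl(1-c(\alpha_K,\gamma)\bigr)\,\|\mathcal{P}_{\mathcal{I}}^\perp G\|_F\lesssim \epsilon\sqrt{k}.
\]
The threshold $\rho_0>1-\frac{0.64k}{(0.64+\alpha_K^2)\mu_K^{\mathcal{I}}r_Km}$ is exactly $\gamma(1+\alpha_K^2/0.64)<1$, i.e. $\alpha_K\sqrt{\gamma}\le 0.8\sqrt{1-\gamma}$. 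So I expect the algebra to yield coefficients like $\sqrt{\gamma}/\sqrt{1-\gamma}$, with the constant $0.8$ leaving slack, and this is where the factor $(\alpha_K+\sqrt{1+\alpha_K^2})/\alpha_K$ should appear. Finally, I sum the $\mathcal{I}$ and $\mathcal{I}^\perp$ parts to bound $\|G\|_F$, then divide by $\sqrt{k}$; the $2$ in $18\sqrt{k}+2$ comes from the observed part of $H$ directly.

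The main obstacle is Step 2: getting a clean bound $\|\mathcal{P}_{\Omega_{\mathcal{A}}}^\perp\mathcal{P}_{T}\|\le\sqrt{\gamma}$ for the column space defined by $K_{\mathcal{I}}$, using only the row/column zero counts of $\Theta_{\Omega_{\mathcal{A}}}$, and then tracking constants carefully enough to reproduce $0.64$ and $18$. I would first prove the operator bound by writing entrywise
\[
\|\mathcal{P}_{\Omega_{\mathcal{A}}}^\perp(YK_{\mathcal{I}}^T)\|_F^2=\sum_{(a,b)\notin\Omega_{\mathcal{A}}}\langle Y_{a,:},[K_{\mathcal{I}}]_{b,:}\rangle^2
\]
and applying Cauchy--Schwarz row by row, accepting looser constants if needed.
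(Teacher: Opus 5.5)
\textbf{The gap is in Step 2, and Step 3 makes it unrecoverable.} The only quantitative estimate you actually have is the row-by-row bound $\|\mathcal{P}_{\Omega_{\mathcal{A}}}^\perp(ZK_{\mathcal{I}}^T)\|_F\le\sqrt{\gamma}\,\|Z\|_F$. That bound is correct, but it is a Frobenius-to-Frobenius statement.

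Put $B=\mathcal{A}_k(L_0)K_{\mathcal{I}}D_{\mathcal{I}}$. Then $X_0D$ is $B$ padded with zero columns, $\langle X_0D,G\rangle=\langle B,G_{\mathcal{I}}\rangle$, and $\|B\|=\alpha_K(L_0)$. Since $B$ has $r_K(L_0)$ unit-norm columns, $\|B\|_F=\sqrt{r_K(L_0)}$.

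Write $G_{\mathcal{I}^c}$ and $K_{\mathcal{I}^c}$ for the columns outside $\mathcal{I}$. Your unobserved $\mathcal{I}^\perp$ piece is $\sum_i b_iM_ig_i^T$. Here $b_i$ and $g_i$ are the $i$th rows of $B$ and $G_{\mathcal{I}^c}$, and $M_i=K_{\mathcal{I}}^T\mathrm{diag}(\bar\omega_i)K_{\mathcal{I}^c}$ depends on the mask $\bar\omega_i$ of row $i$, with $\|M_i\|\le\sqrt\gamma$. The best Cauchy--Schwarz gives is $|\sum_i b_iM_ig_i^T|\le\sqrt\gamma\sum_i\|b_i\|\|g_i\|\le\sqrt{\gamma\,r_K(L_0)}\,\|G_{\mathcal{I}^c}\|_F$, not $\alpha_K\sqrt\gamma\,\|G_{\mathcal{I}^c}\|_F$. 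The $\mathcal{I}$ piece behaves the same way.

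To get $\|B\|$ instead of $\|B\|_F$, you must pair against a nuclear norm; $\|Z\|_*\le\|Z\|_{2,1}$ would supply one. But then you need a spectral- or $\ell_{2,\infty}$-norm bound on $\mathcal{P}_{\Omega_{\mathcal{A}}}^\perp(BK_{\mathcal{I}}^T)K_{\mathcal{I}^c}$. A Frobenius contraction does not give this, because the masks change from row to row. Moreover, in Step 3 you replace $\|\mathcal{P}_{\mathcal{I}}^\perp G\|_{2,1}$ by $\|\mathcal{P}_{\mathcal{I}}^\perp G\|_F$, discarding the only left-hand norm that could be paired with an operator-norm quantity.

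Done honestly, your loop closes under something like $\sqrt{\gamma\,r_K(L_0)}<1-\gamma$. That puts $r_K(L_0)$ where the theorem has $\alpha_K^2(L_0)\le r_K(L_0)$, which is far more demanding when $\alpha_K\approx 1$. Neither $0.64$ nor $(\alpha_K+\sqrt{1+\alpha_K^2})/\alpha_K$ can come out of it.

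Your injectivity claim for $G_{\mathcal{I}}$ is fine once you apply $\mathcal{P}_{K_{\mathcal{I}}}$ first, which gives $(1-\gamma)\|G_{\mathcal{I}}\|_F\le\|\mathcal{P}_{\Omega_{\mathcal{A}}}(\mathcal{A}_k(H))\|_F+\sqrt\gamma\,\|G_{\mathcal{I}^c}\|_F$. Without that projection, $\mathcal{P}_{\Omega_{\mathcal{A}}}(G_{\mathcal{I}^c}K_{\mathcal{I}^c}^T)$ is not small.

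\textbf{The missing idea is the exact certificate that the paper reuses from Theorem~\ref{thm:noiseless}.} It is $Y=\mathcal{P}_{\Omega_{\mathcal{A}}}\mathcal{P}_{K_{\mathcal{I}}}(\mathcal{P}_{K_{\mathcal{I}}}\mathcal{P}_{\Omega_{\mathcal{A}}}\mathcal{P}_{K_{\mathcal{I}}})^{-1}(BK_{\mathcal{I}}^T)$, which is supported on $\Omega_{\mathcal{A}}$ and whose $K_{\mathcal{I}}$-component is exactly $BK_{\mathcal{I}}^T$. Let $F_0=\mathcal{P}_{K_{\mathcal{I}}}^\perp\mathcal{P}_{\Omega_{\mathcal{A}}}(Y)K$, which is supported off $\mathcal{I}$. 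Then $X_0D=\mathcal{P}_{\Omega_{\mathcal{A}}}(Y)K-F_0$, and your Step 1 becomes $(1-\|F_0\|_{2,\infty})\|\mathcal{P}_{\mathcal{I}}^\perp G\|_{2,1}\le-\langle\mathcal{P}_{\Omega_{\mathcal{A}}}(Y),\mathcal{P}_{\Omega_{\mathcal{A}}}(\mathcal{A}_k(H))\rangle$.

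The unobserved error drops out of the right-hand side entirely. Nothing needs splitting, and $G_{\mathcal{I}}$ need not be controlled at this stage. All interaction between the sampling pattern and $L_0$ sits in $F_0$, which is absorbed through $\ell_{2,1}/\ell_{2,\infty}$ duality; this is why the left side must stay in $\ell_{2,1}$.

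$\alpha_K$ enters only through $\|F_0\|_{2,\infty}\le\|\mathcal{P}_{K_{\mathcal{I}}}^\perp\mathcal{P}_{\Omega_{\mathcal{A}}}(Y)\|\le\alpha_K\sqrt{\gamma/(1-\gamma)}<0.8$ in your notation. That is precisely your inequality $\alpha_K\sqrt\gamma<0.8\sqrt{1-\gamma}$. However, it is used as a spectral-norm bound on the certificate residual, not as a Frobenius coefficient. The paper imports this bound from the cited isomeric-sampling lemmas rather than deriving it by entrywise Cauchy--Schwarz. The factor $1/(1-0.8)=5$ comes from here.

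Only after this does the paper bound $\mathcal{P}_{K_{\mathcal{I}}}\mathcal{P}_{\Omega_{\mathcal{A}}}^\perp(\mathcal{A}_k(H))$ by injectivity. It uses $\mathcal{P}_{\Omega_{\mathcal{A}}}\mathcal{P}_{K_{\mathcal{I}}}\mathcal{P}_{\Omega_{\mathcal{A}}}^\perp=-\mathcal{P}_{\Omega_{\mathcal{A}}}\mathcal{P}_{K_{\mathcal{I}}}^\perp\mathcal{P}_{\Omega_{\mathcal{A}}}^\perp$ and $1-\|\mathcal{P}_{K_{\mathcal{I}}}\mathcal{P}_{\Omega_{\mathcal{A}}}^\perp\mathcal{P}_{K_{\mathcal{I}}}\|\ge\alpha_K^2/(1+\alpha_K^2)$. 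That is the source of $\sqrt{1+\alpha_K^2}/\alpha_K$, and adding the $K_{\mathcal{I}}^\perp$ part gives $(\alpha_K+\sqrt{1+\alpha_K^2})/\alpha_K$. Your injectivity sub-step corresponds to this last stage; what is missing is the certificate stage before it.

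Do not bend your argument to reproduce $18=2\cdot 5\cdot 1.8$. The paper bounds $\|\mathcal{P}_{\Omega_{\mathcal{A}}}(Y)\|$ by $1.8$. But this norm is at least $\alpha_K(L_0)$, since its $K_{\mathcal{I}}$-part is $BK_{\mathcal{I}}^T$, and the triangle inequality only gives $\alpha_K(L_0)+0.8$. So that constant is justified only when $\alpha_K(L_0)=1$.
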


\subsection{Learning Convolution Eigenvectors}\label{sec:matrixk}
We shall show how to calculate the convolution eigenvectors $\{\bar{\kappa}_i\}_{i=1}^k$ used in ICNNM from a collection of $b$ reference tensors $\{L_j^{\mathrm{ref}}\}_{j=1}^b$, where $L_j\in\mathbb{R}^{m_1\times\cdots\times{}m_n}$ has the same dimension as the target $L_0$. To do this, we need to extend the concept of convolution eigenvectors defined on a single tensor to a \textit{tensor ensemble} (i.e., a collection of tensors). 
\begin{definition}\label{def:eigen} Suppose that the kernel size for order-$n$ tensors is set as $k_1\times\cdots\times{}k_n$. Then for a tensor ensemble $\{L_j^{\mathrm{ref}}\}_{j=1}^b$ with $L_j\in\mathbb{R}^{m_1\times\cdots\times{}m_n}$ $(m_t\geq{}k_t,\forall{}t)$, its first convolution eigenvalue $\sigma_1(\{L_j^{\mathrm{ref}}\}_{j=1}^b)$ is defined as
\begin{eqnarray*}
\sigma_1(\{L_j^{\mathrm{ref}}\}_{j=1}^b)=\max_{X}\sum_{j=1}^b\|L_j^{\mathrm{ref}}\star{}X\|_F^2,\textrm{ s.t. } \|X\|_F=1.
\end{eqnarray*}
The maximizer to above problem is called the first convolution eigenvector, denoted as $\kappa_1(\{L_j^{\mathrm{ref}}\}_{j=1}^b)\in\mathbb{R}^{k_1\times\cdots\times{}k_n}$. Similarly, the $i$th ($i=2,\cdots,k$, $k=\Pi_{j=1}^nk_j$) convolution eigenvalue is given by
\begin{align*}
&\sigma_i(\{L_j^{\mathrm{ref}}\}_{j=1}^b)=\max_{X}\sum_{j=1}^b\|L_j^{\mathrm{ref}}\star{}X\|_F^2, \\\nonumber
 &\textrm{ s.t. } \|X\|_F=1, \langle{}X,\kappa_t(\{L_j^{\mathrm{ref}}\}_{j=1}^b)\rangle=0,\forall{}t<i.
\end{align*}
The maximizer to above problem is the $i$th convolution eigenvector, denoted as $\kappa_i(\{L_j^{\mathrm{ref}}\}_{j=1}^b)\in\mathbb{R}^{k_1\times\cdots\times{}k_n}$.
\end{definition}
Subsequently, we use the convolution eigenvectors of the tensor ensemble $\{L_j^{\mathrm{ref}}\}_{j=1}^b$ to construct the orthogonal matrix $K$ required by ICNNM. Here, the computation is done by finding the right singular vectors of the following matrix:
\begin{equation}
  \label{eq:A}
    A= \left [ \begin{matrix}\mathcal{A}_k(L_1^{\mathrm{ref}}) \\ \vdots \\ \mathcal{A}_k(L_b^{\mathrm{ref}}) \\ \end{matrix} \right ]\in\mathbb{R}^{mb\times{}k}. 
\end{equation}
The computational cost of the above pre-learning procedure is dominated by the calculation of $A^TA$, which has a complexity of $\mathcal{O}\left(mbk^2\right)$. 

\subsection{Optimization Algorithm}
In practice, program~\eqref{eq:InductiveCNNM} is seldom used, and thus we only show how to solve the optimization problem~\eqref{eq:icnnm:noisy}. For the ease of implementation, we reformulate equivalently the problem~\eqref{eq:icnnm:noisy} as follows:
\begin{equation}
  \label{eq:admm}
  \min_L \left\| \mathcal{A}_k(L)K \right\|_{2,1} +\frac{\lambda{}k}{2}\|\mathcal{P}_{\Omega}(L-M)\|_F^2,
\end{equation}
where $\lambda>0$ is a hyper-parameter ($\lambda=1000$ in all our experiments), and the amplification by a factor of $k=\Pi_{j=1}^nk_j$ is for the purpose of rescaling the two terms to a similar level. Subsequently, the above problem is solved by leveraging Alternating Direction Method of Multipliers (ADMM)~\cite{admmconv1,lin2010augmented} to solve the following equivalent problem:
\begin{align*}
  \min_{L,Z} \left\| Z \right\|_{2,1} +\frac{\lambda{}k}{2}\| \mathcal{P}_{\Omega}(L-M)\|_F^2,\text{ s.t. } Z = \mathcal{A}_k(L)K, 
\end{align*}
where $Z$ is an auxiliary variable. The corresponding augmented Lagrangian function is
\begin{align*}
\mathcal{L}(L, Z, Y, \theta)&=\left\| Z \right\|_{2,1} +\frac{\lambda{}k}{2}\| \mathcal{P}_{\Omega}(L-M)\|_F^2 \\&+\langle{}Z- \mathcal{A}_k(L)K,Y\rangle+\frac{\theta}{2}\| Z- \mathcal{A}_k(L)K\|_F^2,
\end{align*}
where $Y$ is the Lagrange multiplier and $\theta>0$ is the penalty parameter. Then the above augmented Lagrangian function is minimized alternately. Namely, while fixing the other variables, the variable $Z$ is updated by
\begin{equation*}
  Z=\arg\min_{Z} \frac{1}{\theta}\left\| Z \right\|_{2,1} +\frac{1}{2}\| Z- (\mathcal{A}_k(L)K-Y/\theta)\|_F^2,
\end{equation*}
which has a closed-form solution given by Lemma 3.2 of~\cite{21normcompute}. While keeping the other variables fixed, the variable $L$ is updated as follows:
\begin{equation*}
  L =\frac{\mathcal{A}_k^{*}((Y+\theta Z)K^T)/k+\lambda \mathcal{P}_{\Omega}(M)}{\lambda \Theta_\Omega +\theta},
\end{equation*}
where $\Theta_\Omega$ stands for the mask tensor of $\Omega$, $\mathcal{A}_k^*$ is a map from 
$\mathbb{R}^{m\times{k}}$ to $\mathbb{R}^{m_1\times\cdots\times{}m_n}$ and called the \textit{Hermitian adjoint} of $\mathcal{A}_k$~\cite{liu2022recovery}, and the division on tensors is simply the element-wise tensor division. Finally, the Lagrange multiplier $Y$ and the penalty $\theta>0$ are updated according to the instructions of ADMM~\cite{admmconv1,lin2010augmented}.

In general, the computational cost of the above optimization procedure is dominated by calculating the multiplication of $m\times{}k$ and $k\times{}k$ matrices, which has a complexity of $\mathcal{O}(mk^2)$. By contrast, the most expensive step of CNNM is to perform SVD on $m\times{}k$ matrices, which also has $\mathcal{O}(mk^2)$ complexity. So, from the viewpoint of computational complexity, there seems no difference between ICNNM and CNNM. Nevertheless, even taking the pre-learning procedure into account, ICNNM can be many times faster than CNNM, as the computation of matrix multiplication is much cheaper and more parallelizable than SVD.

\section{Mathematical Proof}
In this section, we present the detailed proofs of the proposed theorems.
\subsection{Proof to Theorem \ref{thm:noiseless}}
The proof follows a standard line of analysis. We begin by establishing the following lemma, which specifies the conditions under which the solution to~\eqref{eq:InductiveCNNM} is unique and exact.
 \begin{lemma}[Noiseless]~\label{lemma:noiseless} Use the same notations as in
Theorem \ref{thm:noiseless} and let $\mathcal P_{K_{\mathcal I}}(\cdot) :=(\cdot) K_{\mathcal I} K_{\mathcal I}^T$ with $\mathcal{P}_{K_\mathcal{I}}^\perp(\cdot)=\mathcal{I}-P_{K_{\mathcal I}}(\cdot)$. The $L=L_0$ is the unique minimizer to the ICNNM problem~\eqref{eq:InductiveCNNM}, as long as: 

\begin{enumerate}[]
\item $\mathcal{P}_{\Omega_A}^\perp \bigcap \mathcal{P}_{K_\mathcal{I}} ={0}$
\item  $\exists Y \in R^{m \times k},\mathcal{P}_{K_\mathcal{I}}\mathcal{P}_{\Omega_A}(Y)=\mathcal{A}_k(L_0)K_\mathcal{I}DK_\mathcal{I}^T,\\ ||\mathcal{P}_{K_\mathcal{I}}^\perp \mathcal{P}_{\Omega_A}(Y)K||_{2,\infty}<1$
\end{enumerate}
\end{lemma}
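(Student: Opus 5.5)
We follow the standard dual-certificate argument for convex recovery, here for the $\ell_{2,1}$ norm composed with the linear map $L\mapsto\mathcal{A}_k(L)K$.

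\textbf{Step 1: feasible perturbations lie in the unsampled convolution entries.} Let $L=L_0+\Delta$ be any feasible point. Then $\mathcal{P}_\Omega(\Delta)=0$. The convolution matrix is built from shifted copies of the tensor, and its mask is $\Theta_{\Omega_{\mathcal{A}}}=\mathcal{A}_k(\Theta_\Omega)$. Hence $H:=\mathcal{A}_k(\Delta)$ satisfies $\mathcal{P}_{\Omega_{\mathcal{A}}}(H)=0$, i.e. $H=\mathcal{P}_{\Omega_{\mathcal{A}}}^\perp(H)$. It therefore suffices to show
\begin{equation*}
\|(\mathcal{A}_k(L_0)+H)K\|_{2,1}>\|\mathcal{A}_k(L_0)K\|_{2,1}
\end{equation*}
for every nonzero such $H$. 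The map $\mathcal{A}_k$ is injective, since $\mathcal{A}_k(\Delta)$ contains every entry of $\Delta$, so $\Delta\neq0$ implies $H\neq 0$.

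\textbf{Step 2: subgradient inequality.} Write $X_0=\mathcal{A}_k(L_0)K$; its nonzero columns are indexed by $\mathcal{I}$. A subgradient of $\|\cdot\|_{2,1}$ at $X_0$ has the form $G=X_0D+W$, where $W$ is supported on the columns outside $\mathcal{I}$ and $\|W\|_{2,\infty}\le1$. Pull back through $K$ using $\|ZK\|_{2,1}$ and set $W=\mathcal{P}_{K_\mathcal{I}}^\perp(\tilde W)K$ in the appropriate coordinates. Choose $W$ so that $\langle W,HK\rangle=\|\mathcal{P}_{K_\mathcal{I}}^\perp(H)K\|_{2,1}$; in $K$-coordinates, $\mathcal{P}_{K_\mathcal{I}}^\perp(H)K$ is exactly the off-$\mathcal{I}$ column block of $HK$. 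This gives
\begin{align*}
\|(\mathcal{A}_k(L_0)+H)K\|_{2,1}\ge{}&\|X_0\|_{2,1}+\langle \mathcal{A}_k(L_0)K_\mathcal{I}DK_\mathcal{I}^T,H\rangle\\
&+\|\mathcal{P}_{K_\mathcal{I}}^\perp(H)K\|_{2,1}.
\end{align*}
Here we use $X_0DK^T=\mathcal{A}_k(L_0)K_\mathcal{I}DK_\mathcal{I}^T$, because $D$ vanishes off $\mathcal{I}$.

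\textbf{Step 3: insert the certificate.} From $\mathcal{P}_{\Omega_{\mathcal{A}}}(H)=0$ we get $\langle\mathcal{P}_{\Omega_{\mathcal{A}}}(Y),H\rangle=0$. Split $\mathcal{P}_{\Omega_{\mathcal{A}}}(Y)$ into its $\mathcal{P}_{K_\mathcal{I}}$ and $\mathcal{P}_{K_\mathcal{I}}^\perp$ parts; the first equals $\mathcal{A}_k(L_0)K_\mathcal{I}DK_\mathcal{I}^T$ by condition 2. Then
\begin{align*}
\langle \mathcal{A}_k(L_0)K_\mathcal{I}DK_\mathcal{I}^T,H\rangle&=-\langle\mathcal{P}_{K_\mathcal{I}}^\perp\mathcal{P}_{\Omega_{\mathcal{A}}}(Y),\mathcal{P}_{K_\mathcal{I}}^\perp(H)\rangle\\
&\ge-\|\mathcal{P}_{K_\mathcal{I}}^\perp\mathcal{P}_{\Omega_{\mathcal{A}}}(Y)K\|_{2,\infty}\,\|\mathcal{P}_{K_\mathcal{I}}^\perp(H)K\|_{2,1}.
\end{align*}
The inequality holds because $K$ is orthogonal, so the inner product may be computed after right-multiplying by $K$, and $\ell_{2,\infty}$ and $\ell_{2,1}$ are dual norms. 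Combining with Step 2, the objective strictly increases by at least $(1-\|\cdot\|_{2,\infty})\|\mathcal{P}_{K_\mathcal{I}}^\perp(H)K\|_{2,1}$, which is positive unless $\mathcal{P}_{K_\mathcal{I}}^\perp(H)=0$.

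\textbf{Step 4: the degenerate case.} If $\mathcal{P}_{K_\mathcal{I}}^\perp(H)=0$, then $H$ lies in the range of $\mathcal{P}_{K_\mathcal{I}}$. It also lies in the range of $\mathcal{P}_{\Omega_{\mathcal{A}}}^\perp$, so condition 1 forces $H=0$, hence $\Delta=0$. This yields uniqueness.

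The main obstacle is the bookkeeping in Step 2: correctly identifying the subdifferential in $K$-rotated coordinates and checking that $\mathcal{P}_{K_\mathcal{I}}^\perp(H)K$ is the off-support column block. Orthogonality of $K$ makes these identities exact, so this step should go through.
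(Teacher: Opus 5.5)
Your proposal is correct and follows the same dual-certificate argument as the paper. Both proofs use the $\ell_{2,1}$ subgradient at $\mathcal{A}_k(L_0)K$, with the off-$\mathcal{I}$ part chosen to attain $\|\mathcal{P}_{K_\mathcal{I}}^\perp(\mathcal{A}_k(\Delta))K\|_{2,1}$, and cancel $\langle \mathcal{P}_{\Omega_A}(Y),\mathcal{A}_k(\Delta)\rangle=0$ because $\mathcal{A}_k(\Delta)$ is supported on $\Omega_A^\perp$. This yields the $(1-\|\mathcal{P}_{K_\mathcal{I}}^\perp\mathcal{P}_{\Omega_A}(Y)K\|_{2,\infty})$ lower bound, and condition 1 then rules out the degenerate case. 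Your coordinate bookkeeping is somewhat more careful than the paper's write-up: you keep $Y$ unrotated, pass through $K$ by orthogonality, and note explicitly that $\mathcal{A}_k$ is injective and that the strict inequality is needed.
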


\textit{Proof:}

 Denote $F_0=\mathcal{P}_{K_\mathcal{I}}^\perp \mathcal{P}_{\Omega_A}(Y)K$, then we have
\begin{equation}
\label{eq:proof1}
\begin{aligned}
&\mathcal{A}_k^*(\mathcal{A}_k(L_0)KDK^T+F_0K^T)\\
&=\mathcal{A}_k^*(\mathcal{P}_{\Omega_A}(Y)K^T)\\ 
&=\mathcal{P}_{\Omega}(\mathcal{A}_k^*(YK^T))\\ 
&\in \mathcal{P}_{\Omega}
\end{aligned}
\end{equation}
\par
Problem (\ref{eq:InductiveCNNM}) is convex, meaning that any feasible point satisfying a valid subgradient optimality condition is a global minimizer. Since the constructed Y certifies that meets this condition, $L_0$ is a global optimal solution to the problem in (\ref{eq:InductiveCNNM}). A proof of the uniqueness of $L_0$ then follows by showing, via subgradient bounds, that any other feasible perturbation must strictly increase the objective.\par
According to Lemma 2.2 in~\cite{7152948}, for the function $f(L)=||\mathcal{A}_k(L)K||_{2,1}$, if there exists a matrix $F$ such that $\mathcal{P}_{K_\mathcal{I}}(F)=0$ and $||F||_{2,\infty} \le 1$, then $\mathcal{A}^*_k(\mathcal{A}_k(L)KDK^T+FK^T)$ is the subgradient of $f(L)$.\par
Consider another feasible solution $L_0+\Delta$ with $\Delta \in P_\Omega^\perp$. We need to demonstrate that the value of the optimization objective increases unless $\Delta=0$. \par
Due to the duality between the $||\cdot||_{2,1}$ and $||\cdot||_{2,\infty}$ norms, and given the conditions specified in the previous paragraph, we can always find a matrix $F$ such that
\begin{equation}
\label{eq:prooff}
\left<F,\mathcal{A}_k(\Delta)K\right>=||\mathcal{P}_{K_\mathcal{I}}^\perp (\mathcal{A}_k(\Delta)K)||_{2,1}
\end{equation}
Thus, we have
\begin{equation}
\label{eq:proof2}
\begin{aligned}
& ||\mathcal{A}_k(L_0 + \Delta)K||_{2,1} - ||\mathcal{A}_k(L_0)K||_{2,1} \\
\geq & \left< \mathcal{A}_k(L_0)K_\mathcal{I}D + F, \mathcal{A}_k(\Delta)K \right> \\
= & \left< \mathcal{P}_{\Omega_A}(Y) + F - F_0, \mathcal{A}_k(\Delta)K \right> \\
\geq & (1 - ||F_0||_{2,\infty}) || \mathcal{P}_{K_\mathcal{I}}^\perp \mathcal{A}_k(\Delta)K ||_{2,1} \\
& + \left< \mathcal{P}_{\Omega_A}(Y), \mathcal{A}_k(\Delta)K \right>
\end{aligned}
\end{equation}

According to Lemma 2.1 in ~\cite{liu2022recovery}, the second term in equation (\ref{eq:proof2}) is exactly zero. Thus, we have
\begin{equation}
\label{eq:prooffinal}
\begin{aligned}
&||\mathcal{A}_k(L_0+\Delta)K||_{2,1}-||\mathcal{A}_k(L_0)K||_{2,1}\\&\geq (1-||F_0||_{2,\infty})||\mathcal{P}_{K_\mathcal{I}}^\perp \mathcal{A}_k(\Delta)K||_{2,1}
\end{aligned}
\end{equation}
Given that $||F_0||_{2,\infty} \le 1$ and $\mathcal{P}_{\Omega_A}^\perp \bigcap \mathcal{P}_{K_\mathcal{I}} ={0}$,$||\mathcal{A}_k(L_0+\Delta)K||_{2,1}$ is greater than $||\mathcal{A}_k(L_0)K||_{2,1}$ unless $\Delta=0$ considering that $\mathcal{A}_k(\Delta) \in \mathcal{P}_{\Omega_A}^\perp$.$\hfill\blacksquare$\par
Having established that Lemma \ref{lemma:noiseless} ensures exact recovery, we next identify the conditions required for the Lemma \ref{lemma:noiseless} to hold and show how they lead directly to Theorem \ref{thm:noiseless}.
\par
Lemma 5.6 and 5.8 in~\cite{8815944} imply that $K_\mathcal{I}$ is $\Omega_A^T$-isomeric if and only if $\mathcal{P}_{\Omega_A}^\perp \bigcap \mathcal{P}_{K_\mathcal{I}} =\{0\}$ under which condition $\mathcal{P}_{K_\mathcal{I}}\mathcal{P}_{\Omega_A}\mathcal{P}_{K_\mathcal{I}}$ is invertible, define $Y$ as
\begin{equation}
\label{eq:prooffinaly}
Y= \mathcal{P}_{\Omega_A}\mathcal{P}_{K_\mathcal{I}}(\mathcal{P}_{K_\mathcal{I}}\mathcal{P}_{\Omega_A}\mathcal{P}_{K_\mathcal{I}})^{-1}(\mathcal{A}_k(L_0)K_\mathcal{I}DK_\mathcal{I}^T)
\end{equation}
It follows directly that $\mathcal{P}_{K_\mathcal{I}}\mathcal{P}_{\Omega_A}(Y)=\mathcal{A}_k(L_0)K_\mathcal{I}DK_\mathcal{I}^T$. Then

\begin{equation}
\label{eq:proofalphanorm}
\begin{aligned}
& \Vert \mathcal{P}_{K_\mathcal{I}}^\perp \mathcal{P}_{\Omega_A}(Y)K \Vert_{2,\infty}  \le \Vert \mathcal{P}_{K_\mathcal{I}}^\perp \mathcal{P}_{\Omega_A}(Y) \Vert \Vert K \Vert_{2,\infty} \\
& \le \Vert \mathcal{P}_{K_\mathcal{I}}^\perp \mathcal{P}_{\Omega_A} \mathcal{P}_{K_\mathcal{I}} (\mathcal{P}_{K_\mathcal{I}} \mathcal{P}_{\Omega_A} \mathcal{P}_{K_\mathcal{I}})^{-1} \Vert \Vert \mathcal{A}_k(L_0) K_\mathcal{I} D K_\mathcal{I}^T \Vert \\
& \le \alpha_K(L_0) \cdot \Vert \mathcal{P}_{K_\mathcal{I}}^\perp \mathcal{P}_{\Omega_A} \mathcal{P}_{K_\mathcal{I}} (\mathcal{P}_{K_\mathcal{I}} \mathcal{P}_{\Omega_A} \mathcal{P}_{K_\mathcal{I}})^{-1} \Vert
\end{aligned}
\end{equation}

Denote by $\rho$ the smallest fraction of observed entries in each row and column of $A_k(L_0)$. Provided that $\rho_0 > 1-\frac{k}{(1+ \alpha_K^2(L_0))\mu_{K}^{\mathcal{I}}r_K(L_0)m}$, we have
\begin{equation}
\begin{aligned}
\rho \textgreater 1-\frac{1}{(1+ \alpha_K^2(L_0))\mu_{K}^{\mathcal{I}}r_K(L_0)} \\
\end{aligned}
\end{equation}
Following from Theorem 3.4 in~\cite{8815944} with $K_\mathcal{I}$ being $\Omega_A^T$-isomeric, we have $\gamma_{\Omega_A^T} (K_\mathcal{I}^T) \textgreater \frac{\alpha_K^2(L_0)}{1+ \alpha_K^2(L_0)}$. And with the help of Lemma 5.10 in~\cite{8815944}, we have that:

\begin{equation}
\begin{aligned}
&\Vert \mathcal{P}_{K_\mathcal{I}}^\perp \mathcal{P}_{\Omega_A}\mathcal{P}_{K_\mathcal{I}}(\mathcal{P}_{K_\mathcal{I}}\mathcal{P}_{\Omega_A}\mathcal{P}_{K_\mathcal{I}})^{-1}\Vert \\
&= \sqrt{\frac{1}{\gamma_{\Omega_A^T} (K_\mathcal{I}^T)}-1} \textless \frac{1}{\alpha_K(L_0)} \\
\end{aligned}
\end{equation}
Take it into the equation (\ref{eq:proofalphanorm}), we finally have that
\begin{equation}
\label{eq:prooffinalyy}
\begin{aligned}
&\Vert \mathcal{P}_{K_\mathcal{I}}^\perp \mathcal{P}_{\Omega_A}(Y)K\Vert_{2,\infty} \textless 1
\end{aligned}
\end{equation}
This completes the proof of Theorem~\ref{thm:noiseless} in light of Lemma~\ref{lemma:noiseless}.$\hfill\blacksquare$

\subsection{Proof to Theorem \ref{thm:noisy}}

\textit{Proof:}\par
 Denote $N=\bar{L}-L_0$ and $N_A=\mathcal{A}_k(N)$. Knowing that $\Vert \mathcal{P}_{\Omega} (\bar{L}-M)\Vert_{F}\le \epsilon$ and $\Vert \mathcal{P}_{\Omega} (L_0-M)\Vert_{F}\le \epsilon$ and by triangle inequality, we have that $\Vert \mathcal{P}_{\Omega} (N)\Vert_{F}\le 2\epsilon$ which further means that $\Vert \mathcal{P}_{\Omega_A} (N_A)\Vert_{F}^2\le 4k\epsilon^2$.\par
 Now, we define $Y$ and $F_0$ in the same way as in the proof to Theorem 1. Notice that $\bar{L}=L_0+N$ is an optimal solution, we have the following:
 \begin{equation}
\begin{aligned}
0 &\ge\Vert \mathcal{A}_k(L_0+N)\Vert_{2,1}-\Vert \mathcal{A}_k(L_0)\Vert_{2,1}\\
&\ge (1-||F_0||_{2,\infty})||\mathcal{P}_{K_\mathcal{I}}^\perp (N_A) K||_{2,1}+\left<\mathcal{P}_{\Omega_A}(Y),N_AK\right>  \\
\end{aligned}
\end{equation}
Notice that for any matrix $X \in R^{m\times k}$, we have that $\Vert  X \Vert_{F}\le\Vert  X \Vert_{2,1}\le \sqrt{k}\Vert  X \Vert_{F}$ according to the fundamental inequality. Provided that $\rho_0 > 1-\frac{0.64k}{(0.64+\alpha_K^2(L_0))\mu_{K}^{\mathcal{I}}r_K(L_0)m}$, we have that $\Vert \mathcal{P}_{K_\mathcal{I}}^\perp \mathcal{P}_{\Omega_A}(Y)\Vert < 0.8$ as well as $||F_0||_{2,\infty} < 0.8$. By Lemma 2.5 in~\cite{7152948}, We have that

\begin{equation}
\begin{aligned}
&\Vert \mathcal{P}_{K_\mathcal{I}}^\perp (N_A) K\Vert_{2,1} \le -5\left<\mathcal{P}_{\Omega_A}(Y),N_AK\right>\\
&\le 5\Vert \mathcal{P}_{\Omega_A}(Y)K^T\Vert_{2,\infty}\Vert \mathcal{P}_{\Omega_A} (N_A)\Vert_{2,1}\\
&\le 5\Vert  \mathcal{P}_{\Omega_A}(Y)\Vert \Vert K^T\Vert_{2,\infty}\Vert \mathcal{P}_{\Omega_A} (N_A)\Vert_{2,1}\\
&\le 9\Vert \mathcal{P}_{\Omega_A} (N_A)\Vert_{2,1} \\
&\le 9\sqrt{k}\Vert \mathcal{P}_{\Omega_A} (N_A)\Vert_{F} \\
&\le 18k\epsilon
\end{aligned}
\end{equation}
then we have that $\Vert  \mathcal{P}_{K_\mathcal{I}}^\perp (N_A)\Vert_F=\Vert  \mathcal{P}_{K_\mathcal{I}}^\perp (N_A)K\Vert_F\le 18k\epsilon$. Hence, we have that
\begin{equation}
\begin{aligned}
&\Vert \mathcal{P}_{K_\mathcal{I}}^\perp \mathcal{P}_{\Omega_A}^\perp (N_A)\Vert_{F} \\
&= \Vert \mathcal{P}_{K_\mathcal{I}}^\perp (N_A)-\mathcal{P}_{K_\mathcal{I}}^\perp \mathcal{P}_{\Omega_A} (N_A)\Vert_{F}\\
&\le (18k+2\sqrt{k})\epsilon
\end{aligned}
\end{equation}
Then
\begin{equation}
\begin{aligned}
&\Vert \mathcal{P}_{\Omega_A}\mathcal{P}_{K_\mathcal{I}} \mathcal{P}_{\Omega_A}^\perp (N_A)\Vert_{F}^2 \\&
=\left<\mathcal{P}_{K_\mathcal{I}}\mathcal{P}_{\Omega_A}\mathcal{P}_{K_\mathcal{I}}\mathcal{P}_{\Omega_A}^\perp(N_A),\mathcal{P}_{K_\mathcal{I}}\mathcal{P}_{\Omega_A}^\perp(N_A)\right>\\
&=\left<\mathcal{P}_{K_\mathcal{I}}(I-\mathcal{P}_{\Omega_A}^\perp)\mathcal{P}_{K_\mathcal{I}}\mathcal{P}_{\Omega_A}^\perp(N_A),\mathcal{P}_{K_\mathcal{I}}\mathcal{P}_{\Omega_A}^\perp(N_A)\right>\\
&\ge (1-\Vert \mathcal{P}_{K_\mathcal{I}} \mathcal{P}_{\Omega_A}^\perp \mathcal{P}_{K_\mathcal{I}} \Vert)\Vert \mathcal{P}_{K_\mathcal{I}}\mathcal{P}_{\Omega_A}^\perp(N_A)\Vert_{F}^2\\
&\ge \frac{\alpha_K^2(L_0)}{1+ \alpha_K^2(L_0)}\Vert \mathcal{P}_{K_\mathcal{I}}\mathcal{P}_{\Omega_A}^\perp(N_A)\Vert_{F}^2
\end{aligned}
\end{equation}

which means that
\begin{equation}
\begin{aligned}
&\Vert \mathcal{P}_{K_\mathcal{I}}\mathcal{P}_{\Omega_A}^\perp(N_A)\Vert_{F}^2 \\&
\le\frac{1+\alpha_K^2(L_0)}{\alpha_K^2(L_0)}\Vert \mathcal{P}_{\Omega_A}\mathcal{P}_{K_\mathcal{I}} \mathcal{P}_{\Omega_A}^\perp (N_A)\Vert_{F}^2\\
&=\frac{1+\alpha_K^2(L_0)}{\alpha_K^2(L_0)}\Vert \mathcal{P}_{\Omega_A}\mathcal{P}_{K_\mathcal{I}}^\perp  \mathcal{P}_{\Omega_A}^\perp (N_A)\Vert_{F}^2\\
&\le \frac{1+\alpha_K^2(L_0)}{\alpha_K^2(L_0)}(18k+2\sqrt{k})^2\epsilon^2
\end{aligned}
\end{equation}
Finally, we have that
\begin{equation}
\begin{aligned}
&\Vert N\Vert_{F} =\Vert N_A\Vert_{F} /\sqrt{k}\le (\Vert  \mathcal{P}_{K_\mathcal{I}}^\perp (N_A)\Vert_F\\&+\Vert  \mathcal{P}_{K_\mathcal{I}} \mathcal{P}_{\Omega_A}^\perp(N_A) \Vert_F+\Vert  \mathcal{P}_{K_\mathcal{I}} \mathcal{P}_{\Omega_A}(N_A)\Vert_F)/\sqrt{k}\\
&\le(18\sqrt{k}+2)\frac{\alpha_K(L_0)+\sqrt{1+\alpha_K^2(L_0)}}{\alpha_K(L_0)}\epsilon\\
\end{aligned}
\end{equation}$\hfill\blacksquare$

\section{Experiment}\label{sec:Experiment}
 To verify the effectiveness and efficiency of ICNNM, we conduct experiments on several vision tasks, including image completion, video completion, video prediction and video frame interpolation. Besides CNNM, some representative methods in each task are also considered for comparison. However, Learning-Based CNNM (LbCNNM)~\cite{liu2022time}, which is also a learning-based extension to CNNM, is not included, as LbCNNM focuses on a different problem space and is made specific to vectors (i.e., order-$1$ tensors).\par 
The video frame interpolation experiments are conducted using PyTorch on an NVIDIA A800 GPU, while the remaining experiments are performed in MATLAB R2024a on an NVIDIA GeForce RTX 4090 GPU.
\begin{figure*}[t!]
    \centering
    \begin{subfigure}[b]{.15\linewidth}
        \centering
        \includegraphics[width=\linewidth]{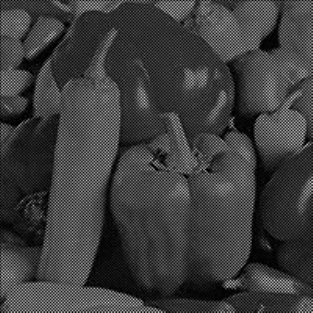}\\\vspace{0.01in}
        \includegraphics[width=\linewidth]{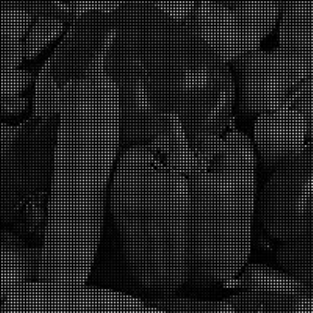}
        \caption{Input}
        \label{fig:irnm}
    \end{subfigure}%
    \hspace{0.02\linewidth}
    \begin{subfigure}[b]{.15\linewidth}
        \centering
        \includegraphics[width=\linewidth]{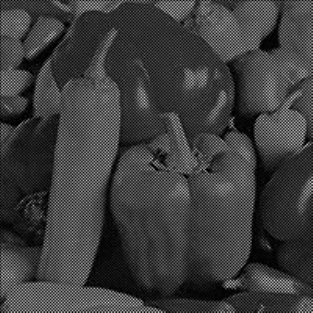} \\\vspace{0.01in}
        \includegraphics[width=\linewidth]{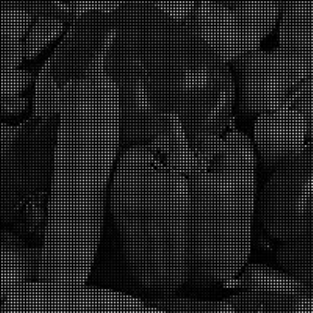}
        \caption{IRNN}
        \label{fig:irnm}
    \end{subfigure}%
    \hspace{0.02\linewidth}
    \begin{subfigure}[b]{.15\linewidth}
        \centering
        \includegraphics[width=\linewidth]{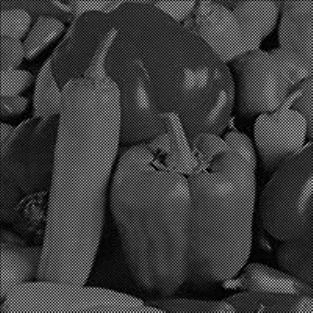} \\\vspace{0.01in}
        \includegraphics[width=\linewidth]{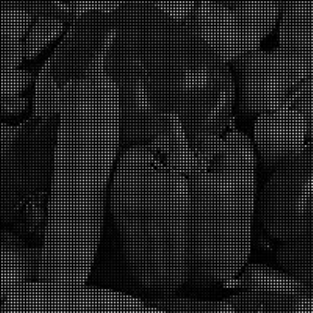}
        \caption{$\mathrm{DFT}_{\ell_1}$}
        \label{fig:dft}
    \end{subfigure}%
    \hspace{0.02\linewidth}
    \begin{subfigure}[b]{.15\linewidth}
        \centering
        \includegraphics[width=\linewidth]{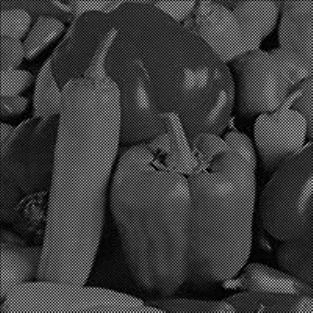} \\\vspace{0.01in}
        \includegraphics[width=\linewidth]{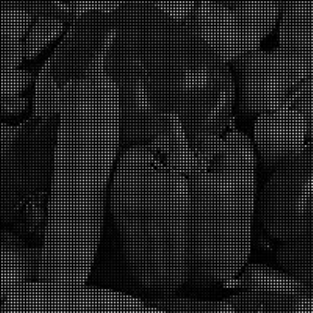}
        \caption{CNNM}
        \label{fig:cnnm}
    \end{subfigure}%
    \hspace{0.02\linewidth}
    \begin{subfigure}[b]{.15\linewidth}
        \centering
        \includegraphics[width=\linewidth]{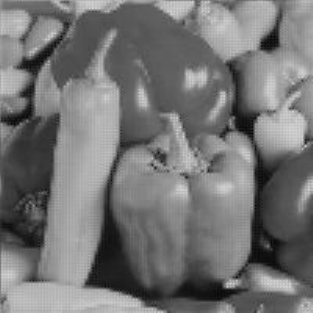} \\\vspace{0.01in}
        \includegraphics[width=\linewidth]{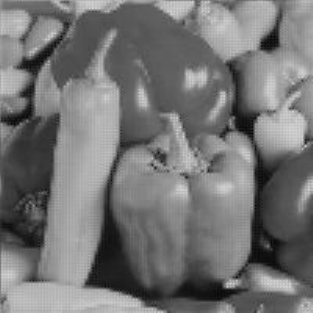}
        \caption{ICNNM}
        \label{fig:icnnm}
    \end{subfigure}%
    \medskip\vspace{-0.1in}
    \caption{
        An example from CSet8 (the rest 7 images are used for training). The missing rates (i.e., $1-\rho_0$) for the first and second rows are $50\%$ and $75\%$, respectively. For ICNNM and CNNM, the kernel size is set as $13 \times 13$. While the competing methods all fail, ICNNM produces superior results with PSNR being $26.42$ and $23.11$ under two missing rates, respectively.  
    }
    \label{fig:example}
\end{figure*}
\subsection{Image Completion}
We first consider the task of restoring the missing pixels in images (i.e., order-2 tensors). Two public datasets are used for experiments, including CSet8 (8 images) and Kodak (24 images, \url{https://r0k.us/graphics/kodak/}). All images are resized to the resolution of $200 \times 200$,
and 7 images from CSet8 are used as reference images for learning the convolution eigenvectors required by ICNNM (the rest 1 image is for testing). To examine the generalization ability of various methods, all 24 images from Kodak are regarded as the testing data. The competing methods include IRNN~\cite{IRNN} as a representative of traditional low rank methods, $\mathrm{DFT}_{\ell_1}$ and CNNM~\cite{liu2022recovery}. The kernel size is set to $13 \times 13$ for both ICNNM and CNNM.  
\begin{figure}[H]
\centering
\includegraphics[width=0.75\linewidth]{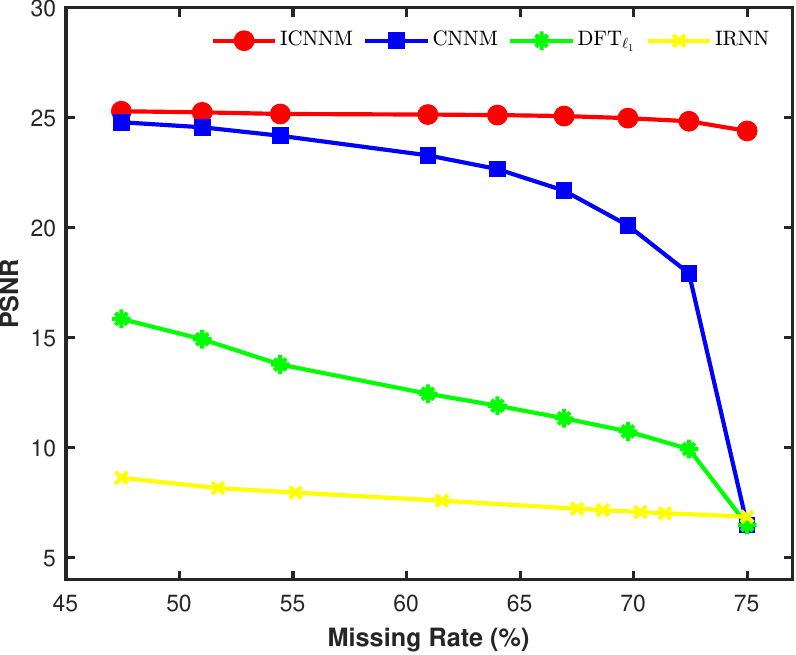}
\caption{Performance of image completion averaged on Kodak.}
\label{fig:Kodak}
\end{figure}
The sampling pattern is specifically designed such that the adjacent pixels are likely to be wholly missing, as exhibited in the first column of Fig~\ref{fig:example}. As we can see from Fig~\ref{fig:example}, even in the difficult cases where all the competing methods fail, ICNNM still achieves fairly accurate recovery. Fig~\ref{fig:Kodak} presents the overall performance of various methods on Kodak (24 images), demonstrating that ICNNM consistently outperforms the competing methods, especially when the missing rate is high. These results also illustrate that ICNNM owns strong generalization ability, as the convolution eigenvectors are learned from a different dataset, CSet8. 
\begin{table}[h]
\centering
\caption{Performance of ICNNM under different schemes of constructing reference images. Results are averaged on Kodak.}
\label{tab:Generalization}
\begin{tabular}{@{}lll@{}}
\toprule
Scheme & $\alpha_K(L_0)$ & PSNR \\
\midrule
G.T. & 1 & 24.62 \\
One randomly generated matrix & 4.5995 & 19.68 \\
The Lena's image & 1.6348 & 24.04 \\
7 images from CSet8 & \textbf{1.4998} & \textbf{24.41} \\
\bottomrule
\end{tabular}
\end{table}

To further investigate the influence of reference images, we test four different schemes: 1) using the target $L_0$ as the reference image, denoted as ``G.T''; 2)using a randomly generated matrix; 3) using the Lena's image public available on the Internet; 4) using 7 images from CSet8. The comparison results are shown in Table~\ref{tab:Generalization}. As we can see, while the reference images are chosen ``carelessly'' from the domain of nature images, the spectral correlation coefficient $\alpha_K(L_0)$ is close to 1 and therefore the recovery performance of ICNNM can get very close to the ideal scheme of using the target $L_0$ for pre-training. These results also illustrate that the convolution spectrum of natural images may be mostly similar, providing evidence to support the doctrine that Convolution Neural Networks have good generalization ability in the natural image domain. 
\begin{table}[h]
\centering
\caption{Performance comparison on the Kodak under deterministic sampling with a missing rate of $60\%$.}
\label{tab:kodak_deterministic_60}
\begin{tabular}{llcc}
\toprule
Method & Kernel Size & PSNR (dB) & Time (s) \\
\midrule
IRNN & \multicolumn{1}{c}{--} & 8.837 & 0.979 \\
$\mathrm{DFT}_{\ell_1}$ & \multicolumn{1}{c}{--} & 14.719 & \textbf{0.341} \\
\midrule
\multirow{4}{*}{CNNM}
& $13 \times 13$ & 27.499 & 18.643 \\
& $23 \times 23$ & 27.604 & 116.186 \\
& $33 \times 33$ & 27.618 & 761.419 \\
& $43 \times 43$ & 27.608 & 2056.302 \\
\midrule
\multirow{4}{*}{ICNNM}
& $13 \times 13$ & 27.811 & 11.189 \\
& $23 \times 23$ & \textbf{27.918} & 59.676 \\
& $33 \times 33$ & 27.908 & 372.174 \\
& $43 \times 43$ & 27.757 & 1061.474 \\
\bottomrule
\end{tabular}
\end{table}

Table~\ref{tab:kodak_deterministic_60} reports the reconstruction performance in terms of PSNR and the average computational time (in seconds) of IRNN, $\mathrm{DFT}_{\ell_1}$, as well as CNNM and ICNNM with different kernel sizes, under deterministic sampling with a missing rate of $60\%$ on the Kodak dataset. The results demonstrate that both CNNM and ICNNM achieve substantially higher reconstruction accuracy than the competing methods. Moreover, compared with CNNM, ICNNM consistently yields superior PSNR while requiring less computational time. This computational advantage becomes increasingly pronounced as the convolutional kernel size grows.

\subsection{Video Completion}\label{subsec:vp}
 We now consider the task of completing partially observed videos, where some frames are chosen randomly to be missing. For experiments, we select two videos from the CDNet 2014 dataset~\cite{6910011}, one of which is used for training and the other for testing. These two videos both have 31 $50\times50$ frames, but record different scenes at different time. Five methods are taken for comparison, including HaLRTC~\cite{6138863}, TNNM~\cite{TNN}, $\mathrm{DFT}_{\ell_1}$, CNNM and WA, where WA is a naive method that restores a missing frame by simply using the weighted average of its adjacent, observed frames. For CNNM and ICNNM, the kernel size is set as $13\times13\times13$.

\begin{figure}[ht]
    \centering

    \begin{minipage}{\linewidth}
        \centering
        \includegraphics[width=\linewidth]{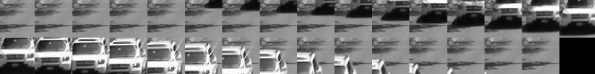}
        \caption*{Training frames}
    \end{minipage}

    \vspace{0.5em}

    \begin{minipage}{\linewidth}
        \centering
        \includegraphics[width=\linewidth]{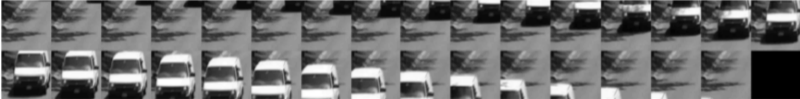}
        \caption*{Test frames}
    \end{minipage}

    \caption{
        Visualization of training and test frames used in the task of video completion and video prediction.
        Each frame has a spatial size of $50 \times 50$. The training frames are only used in ICNNM.
    }
\end{figure}

Fig~\ref{fig:video} presents the results of various methods. Since the sampling pattern is of frame-wise, the methods based on Tucker low-rankness, e.g., HaLRTC and TNNM, will use zero to fill in the missing entries and therefore perform poorly. By contrast, convolutional low-rankness based methods, e.g., CNNM, $\mathrm{DFT}_{\ell_1}$ and ICNNM, can handle such cases. Yet, as shown in Fig~\ref{fig:video}, our ICNNM outperforms distinctly the most close competing method, CNNM. \par
\begin{figure}[H]
\centering
\includegraphics[width=0.75\linewidth]{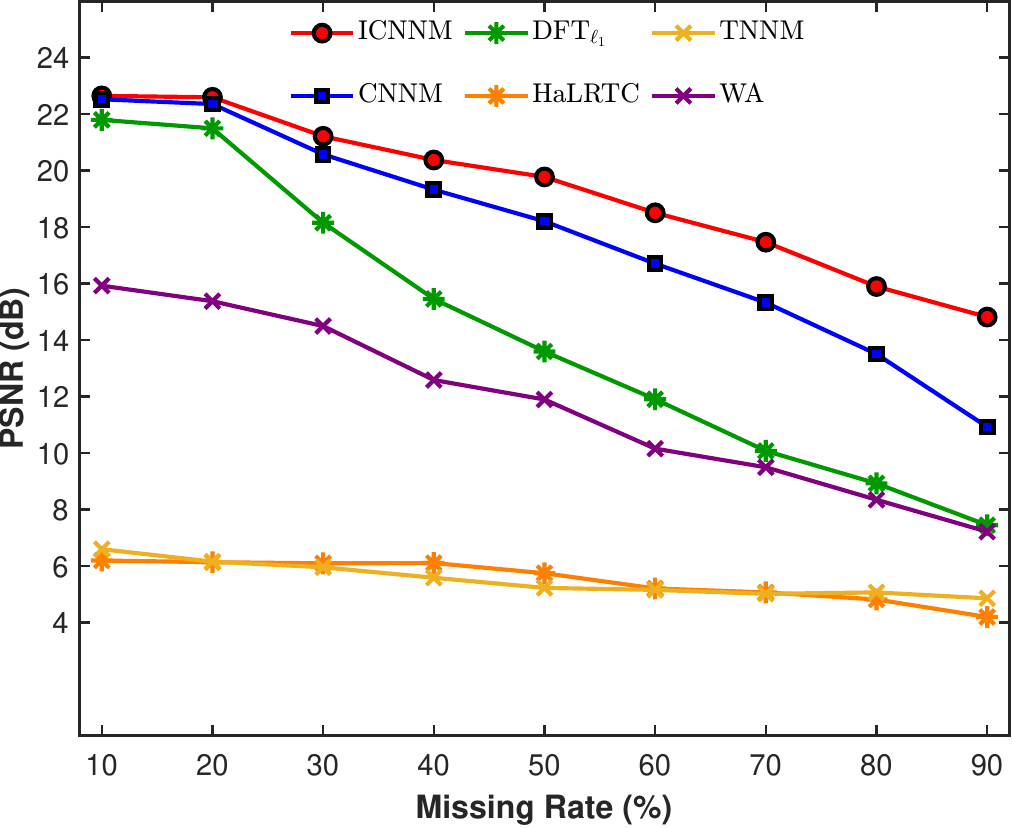}
\caption{Performance of various methods in video completion averaged from 20 runs.}
\label{fig:video}
\end{figure}
Fig~\ref{fig:time} compares the runtime of CNNM and ICNNM, under different settings for the kernel size $k_1\times{}k_2\times{}k_3$. As can be seen, ICNNM is much faster than CNNM, especially when the kernel size is large. This is because, when the matrices getting larger, the advantage of matrix multiplication over SVD will be more dramatic on GPU. 
\begin{figure}[H]
\centering
\includegraphics[width=0.75\linewidth]{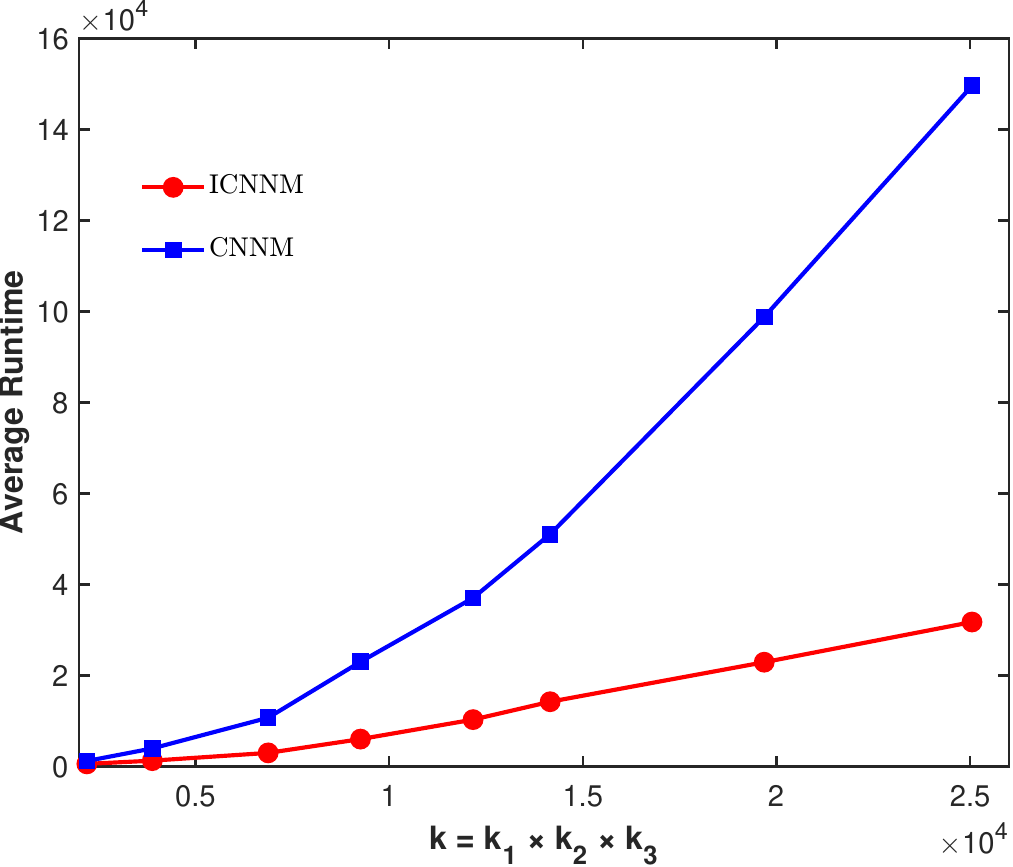}
\caption{Runtime of CNNM and ICNNM under different kernel sizes averaged from 20 runs.}
\label{fig:time}
\end{figure}

\subsection{Video Prediction}\label{subsec:vp}
Furthermore, we conducted experiments under a more extreme sampling method, known as video prediction: the task is to predict the last 8 frames based on the former 23 frames. Besides the baselines considered in video completion, we also include a deep learning based method, termed ConvLSTM~\cite{convlstm}, for comparison. Unlike the proposed ICNNM, which uses only one video for training, the ConvLSTM here is trained using 60 videos of dimension $50\times50\times31$ from CDNet. The first two values of the kernel size are set as $k_1=k_2=13$ for both CNNM and ICNNM. As for $k_3$ that corresponds to the time dimension, as pointed out by~\cite{liu2022recovery}, relatively large numbers are often desirable. \par
\begin{table}[ht]
\centering
\setlength{\tabcolsep}{2.5pt} 
\caption{Performance (in terms of PSNR) of various methods in the task of predicting the last 8 frames based on the first 23 frames of a $50\times50\times31$ video. The number in parentheses after CNNM represents the value of $k_3$, and so as ICNNM.}
\label{tab:prediction}
\begin{tabular}{@{}l
                S[table-format=2.2]
                S[table-format=2.1]
                S[table-format=2.1]
                S[table-format=2.1]
                S[table-format=2.1]
                S[table-format=2.1]
                S[table-format=2.1]
                S[table-format=2.1]@{}}
\toprule
\multirow{2}{*}{\normalsize\shortstack{Method}} & 
\multicolumn{8}{c}{\normalsize Frame} \\ 
\cmidrule(l{1.5pt}r{1.5pt}){2-9}
& 1 & 2 & 3 & 4 & 5 & 6 & 7 & 8 \\ 
\midrule
HaLRTC & 4.20 & 4.8 & 5.1 & 5.0 & 5.0 & 5.2 & 5.8 & 6.1 \\ 
TNNM   & 4.20 & 4.8 & 5.1 & 5.0 & 5.0 & 5.2 & 5.8 & 6.1 \\
\midrule
ConvLSTM & 15.67 & 11.4 & 10.1 & 9.6 & 9.9 & 10.6 & 12.3 & 19.0 \\
$\mathrm{DFT}_{\ell_1}$ & 15.89 & 11.7 & 9.8 & 8.9 & 8.5 & 8.8 & 11.6 & 18.2 \\
CNNM (13) & 16.51 & 13.6 & 12.6 & 12.3 & 12.5 & 13.2 & 16.1 & 21.9 \\
CNNM (23) & 16.78 & 13.8 & 12.6 & 12.3 & 12.5 & 13.3 & 16.3 & 22.4 \\
ICNNM (13) & 16.73 & 14.0 & 13.0 & 12.8 & 13.0 & 13.7 & 16.5 & 22.1 \\
ICNNM (23) & \textbf{17.13} & \textbf{14.3} & \textbf{13.2} & \textbf{13.0} & \textbf{13.3} & \textbf{14.1} & \textbf{17.1} & \textbf{22.9} \\
\bottomrule
\end{tabular}
\end{table}
Table~\ref{tab:prediction} shows the PSNR values achieved by various methods. Due to the same reason as the case of video completion, HaLRTC and TNNM predict the missing frames as zeros, leading to very low PSNR values. As shown in Table~\ref{tab:prediction}, ICNNM consistently outperforms the competing methods across different time stamps, confirming again the superiority of using pre-learned convolution eigenvectors to guide the recovery process. \par
\begin{figure*}[h]
    \centering

    \begin{subfigure}[b]{.16\linewidth}
        \centering
        \includegraphics[width=\linewidth]{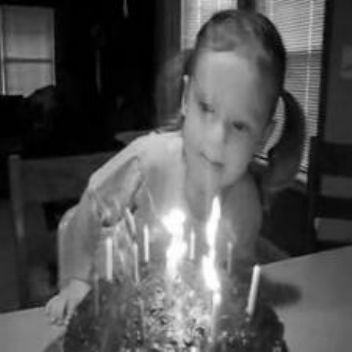} \\
        \includegraphics[width=\linewidth]{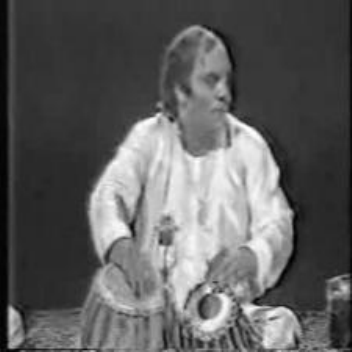}
         \includegraphics[width=\linewidth]{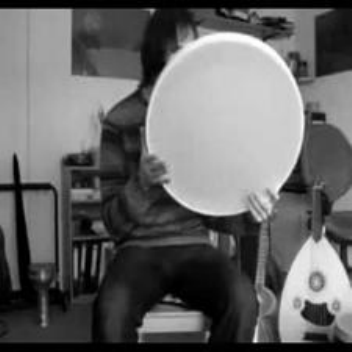}
        \caption{frame 0}
    \end{subfigure}%
    \hfill
    \begin{subfigure}[b]{.16\linewidth}
        \centering
        \includegraphics[width=\linewidth]{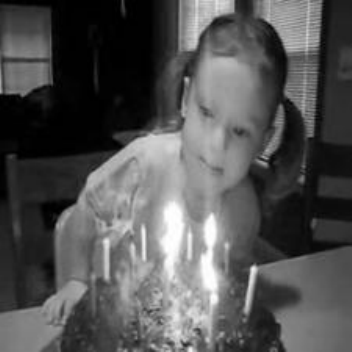} \\
        \includegraphics[width=\linewidth]{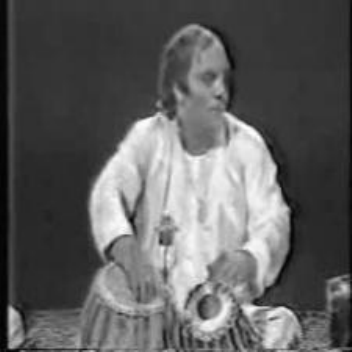}
         \includegraphics[width=\linewidth]{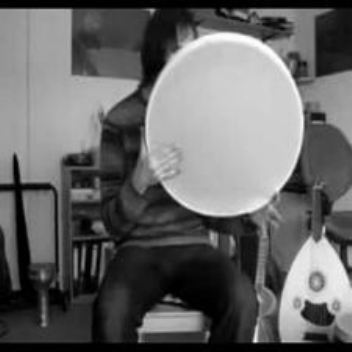}
        \caption{frame 1}
    \end{subfigure}%
    \hfill
    \begin{subfigure}[b]{.16\linewidth}
        \centering
        \includegraphics[width=\linewidth]{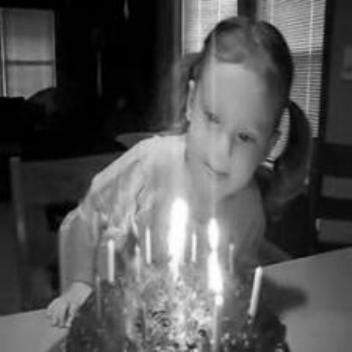} \\
        \includegraphics[width=\linewidth]{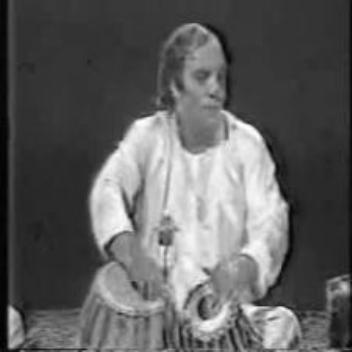}
         \includegraphics[width=\linewidth]{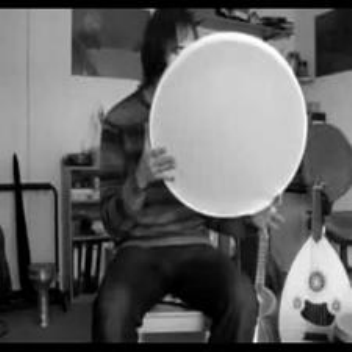}
        \caption{frame 2}
    \end{subfigure}%
    \hfill
    \begin{subfigure}[b]{.16\linewidth}
        \centering
        \includegraphics[width=\linewidth]{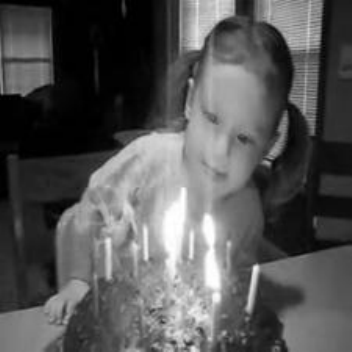} \\
        \includegraphics[width=\linewidth]{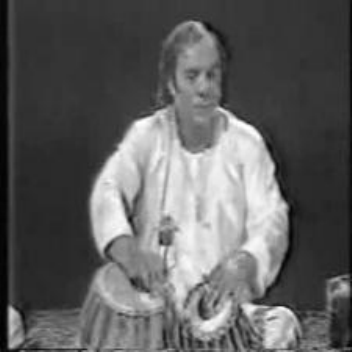}
         \includegraphics[width=\linewidth]{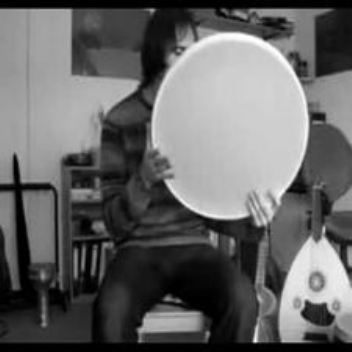}
        \caption{frame 3}
    \end{subfigure}%
    \hfill
    \begin{subfigure}[b]{.16\linewidth}
        \centering
        \includegraphics[width=\linewidth]{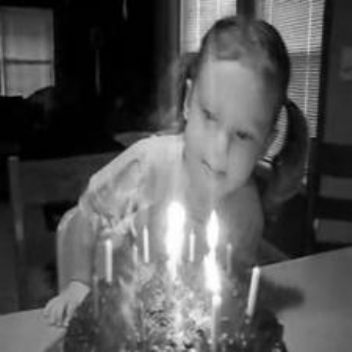} \\
        \includegraphics[width=\linewidth]{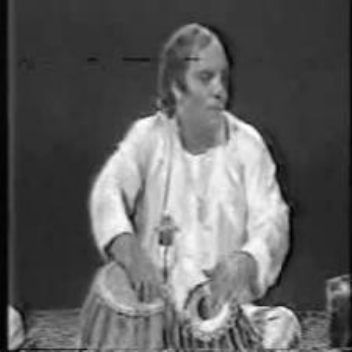}
         \includegraphics[width=\linewidth]{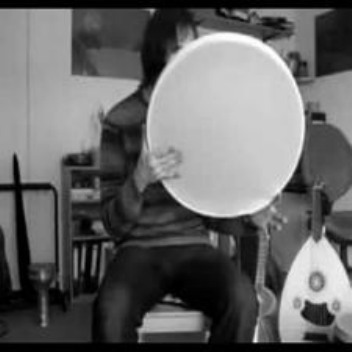}
        \caption{Ground Truth}
    \end{subfigure}%
    \hfill
    \begin{subfigure}[b]{.16\linewidth}
        \centering
        \includegraphics[width=\linewidth]{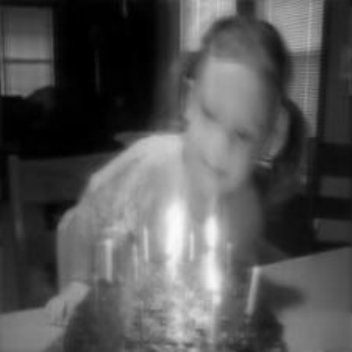} \\
        \includegraphics[width=\linewidth]{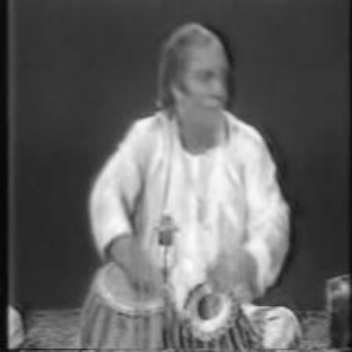}
         \includegraphics[width=\linewidth]{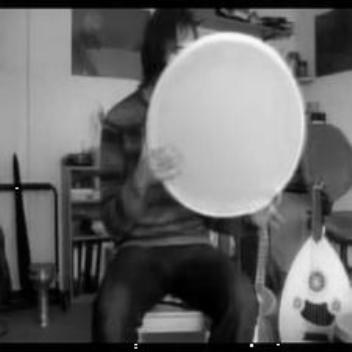}
        \caption{ICNNM}
    \end{subfigure}%
    \medskip
 
    \caption{
        Results of video interpolation with 4 reference frames. The dataset is UCF-101 processed by ~\cite{QVI}. "Ground Truth" denotes the ground truth of the target frame in the middle of "frame 1" and "frame 2". We convert all images to gray scale images and select smaller kernel size of $27\times 27\times 3$ in the consideration of memory overhead.
    }
    \label{fig:Supplementary3}
\end{figure*}

\subsection{Video Frame Interpolation}
Video frame interpolation~\cite{Jiang2017SuperSH,Huang2020RealTimeIF,Zhang2023ExtractingMA} is a popular topic in vision. Currently, most mainstream methods rely on optical flow estimation and inference, achieving high-quality frame interpolation results. However, the optical flow method is based on the assumption that the motion between two frames is small. This assumption may not hold in practical scenarios, such as when the sampling rate of the input video is very low, leading to large time intervals between adjacent frames and potentially compromising the effectiveness of the optical flow method. In contrast, our ICNNM may perform well in such situations, as it implicitly learns the temporal relationships of the target frames from reference tensors.

We use the UCF-101 dataset~\cite{Soomro2012UCF101AD} for experiments. We compare ICNNM with CNNM and several state-of-the-art methods, including SuperSloMo~\cite{Jiang2017SuperSH}, RIFE~\cite{Huang2020RealTimeIF} and EMA-VFI~\cite{Zhang2023ExtractingMA}. Since these competing methods are all specific to the setting of inserting one frame between two frames, the videos in UCF-101 are split into $256 \times 256\times3$, and the kernel size of ICNNM and CNNM is set as $64 \times 64 \times3$.  

Table~\ref{tab:interpolation} reports the quantitative results in terms of PSNR, SSIM, and average computational time. The results indicate that ICNNM consistently outperforms CNNM in both reconstruction accuracy and computational efficiency. In comparison with the competing methods, ICNNM achieves the highest PSNR while delivering comparable SSIM performance. These results demonstrate that, despite being a generic completion approach, ICNNM attains competitive performance even when compared with state-of-the-art methods specifically designed for video frame interpolation.
\begin{table}[h]
\centering
\caption{Comparison against state-of-the-art methods on UCF-101, under the context of video frame interpolation.}
\label{tab:interpolation}
\begin{tabular}{@{}llll@{}}
\toprule
Method & PSNR & SSIM &Time(s)\\
\midrule
CNNM & 21.57 & 0.7732 &4971.51\\
SuperSloMo~\cite{Jiang2017SuperSH} & 25.42 & 0.8773 &\multicolumn{1}{c}{--}\\
RIFE~\cite{Huang2020RealTimeIF} & 29.88 & 0.9391 &\multicolumn{1}{c}{--}\\
EMA-VFI~\cite{Zhang2023ExtractingMA} & 30.16 & \textbf{0.9409} &\multicolumn{1}{c}{--}\\
ICNNM & \textbf{30.61} & 0.9282 &716.34\\
\bottomrule
\end{tabular}
\end{table}

\section{Conclusion and Future Work}
\label{sec:Conclusion}
TCAS which aims to restore a tensor from a subset of its entries sampled arbitrarily, has considerable significance in a wide range of applications and can be solved by CNNM. To resolve issues of high computational cost and sampling complexity of CNNM, we proposed in this paper a novel method termed ICNNM, which utilizes a set of pre-learned convolution eigenvectors to guide the recovery process. The recovery performance of ICNNM was analyzed theoretically, and verified via extension experiments on various tasks, including image completion, video completion, video predicting and video frame interpolation. Both the theoretical analysis and experimental results verify the superiority of ICNNM over existing methods. \par
The computational procedure of ICNNM can be unfolded to a deep network, the structure of which is very similar to the prevalent Convolution Neural Networks. However, how to utilize this structure effectually in still unknown. We shall leave this as the future work.

\bibliography{ref}

\begin{IEEEbiography}
[{\includegraphics[width=1in,
height=1.25in,clip, keepaspectratio]{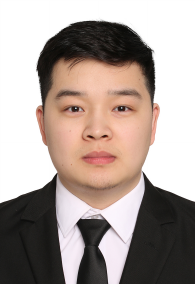}}]{Wei Li} (S’25) is working toward the Ph.D.
degree in the School of Automation, Southeast University, China. He has received the bachelor’s degree in engineering from Southeast University, China, in 2022. His research interests include tensor completion, image processing and generation. He is a student member of the IEEE.
\end{IEEEbiography}
\begin{IEEEbiography}
[{\includegraphics[width=1in,
height=1.25in,clip, keepaspectratio]{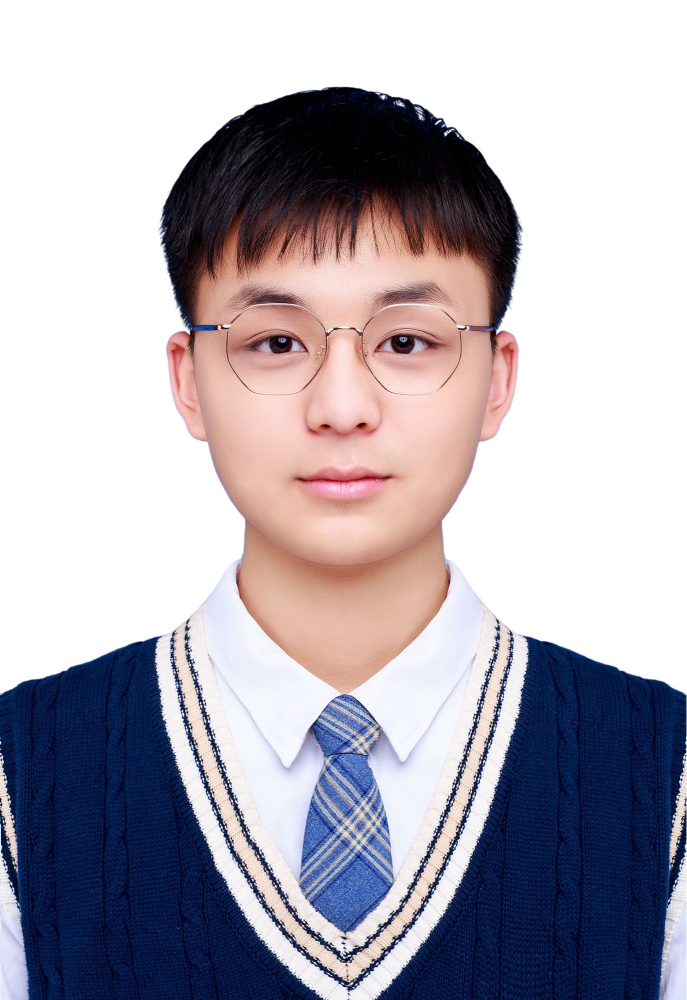}}]{Yuyang Li}(S'25)  is working toward the Ph.D. degree in the School of Automation, Southeast University, China. 
He has received the bachelor's degree in engineering from Southeast University, China, in 2022.
His research interests touch on the areas of machine learning, computer vision and signal processing.
He is a student member of the IEEE.
\end{IEEEbiography}

\begin{IEEEbiography}
[{\includegraphics[width=1in,
height=1.25in,clip, keepaspectratio]{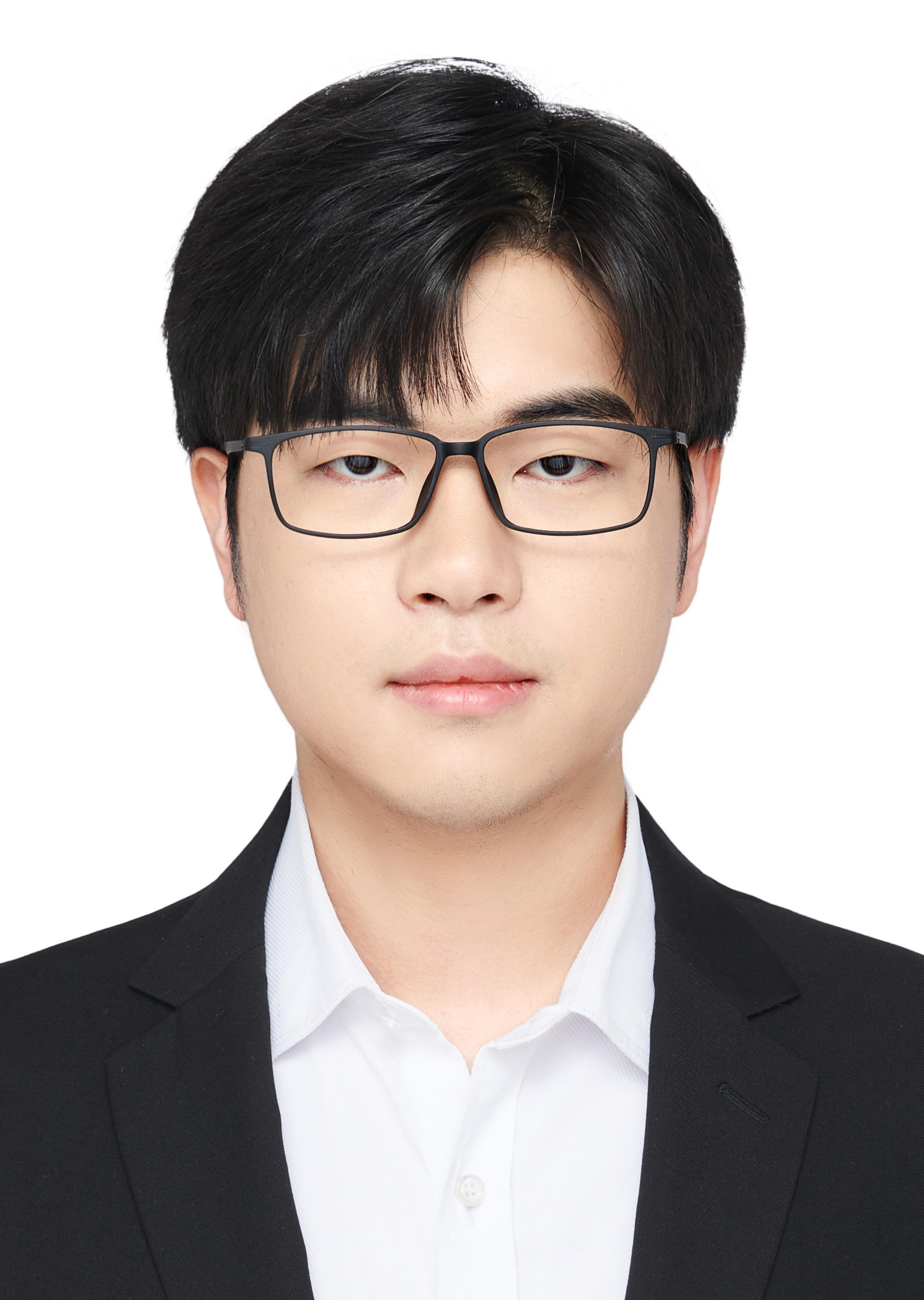}}]{Kaile Du} (S'23)  is working toward the Ph.D. degree in the School of Automation, Southeast University, China. He has published several academic papers on international journals and conferences, such as IEEE TRANSACTIONS ON MULTIMEDIA, IEEE TRANSACTIONS ON
GEOSCIENCE AND REMOTE SENSING, Mathematics, ECCV, AAAI and ICME, etc. His research interests include class-incremental learning, multi-label learning, graph representation learning, etc.
\end{IEEEbiography}

\begin{IEEEbiography}
[{\includegraphics[width=1in,
height=1.25in,clip, keepaspectratio]{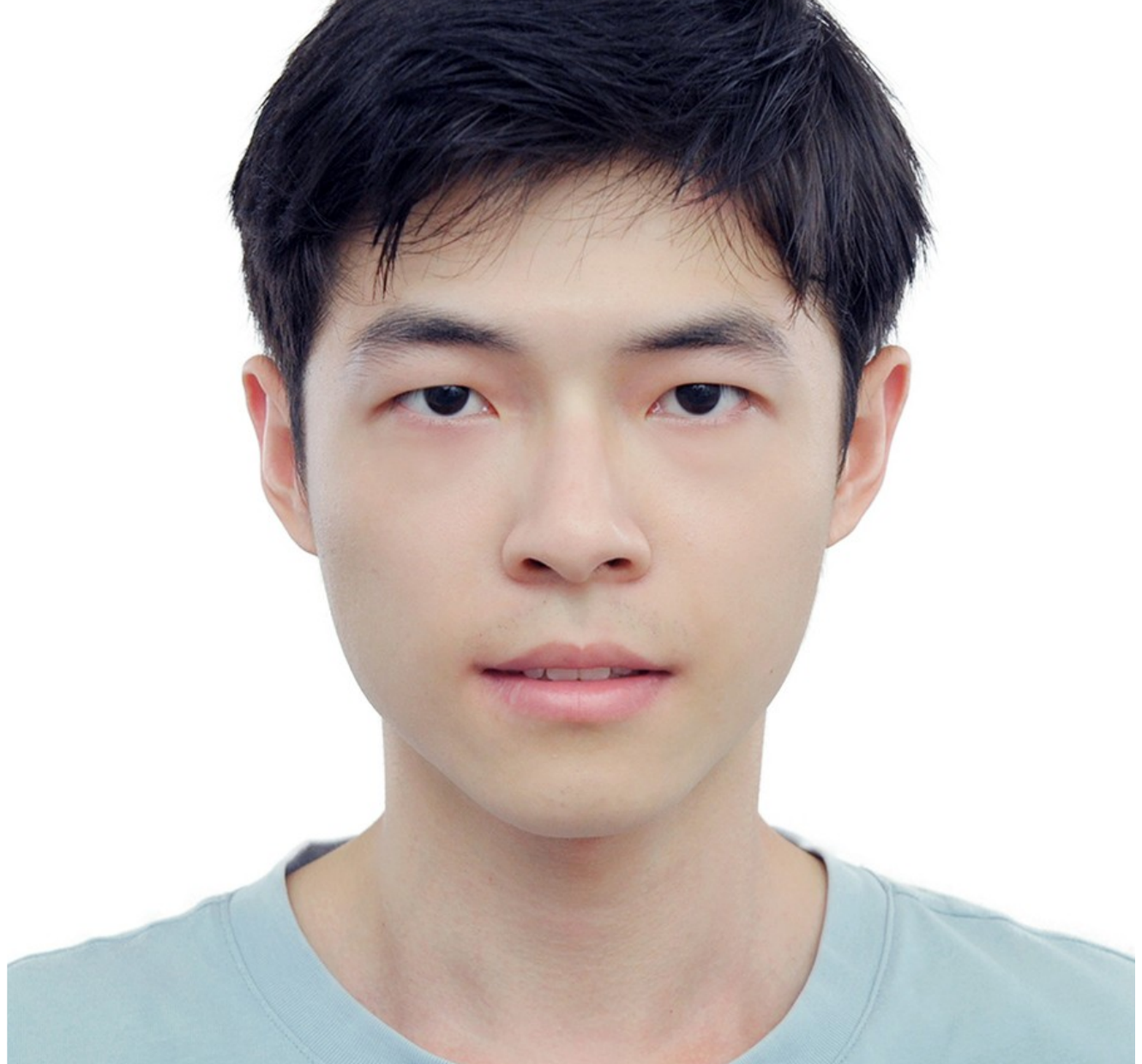}}]{Yi Yu}(S'24)  is working toward the master's degree in the School of Automation, Southeast University, China. He has received the bachelor's degree in engineering from Southeast University, China, in 2023. His research interests touch on the areas of machine learning, time series prediction and image restoration.
He is a student member of the IEEE.
\end{IEEEbiography}
\begin{IEEEbiography}
[{\includegraphics[width=1in,
height=1.25in,clip, keepaspectratio]{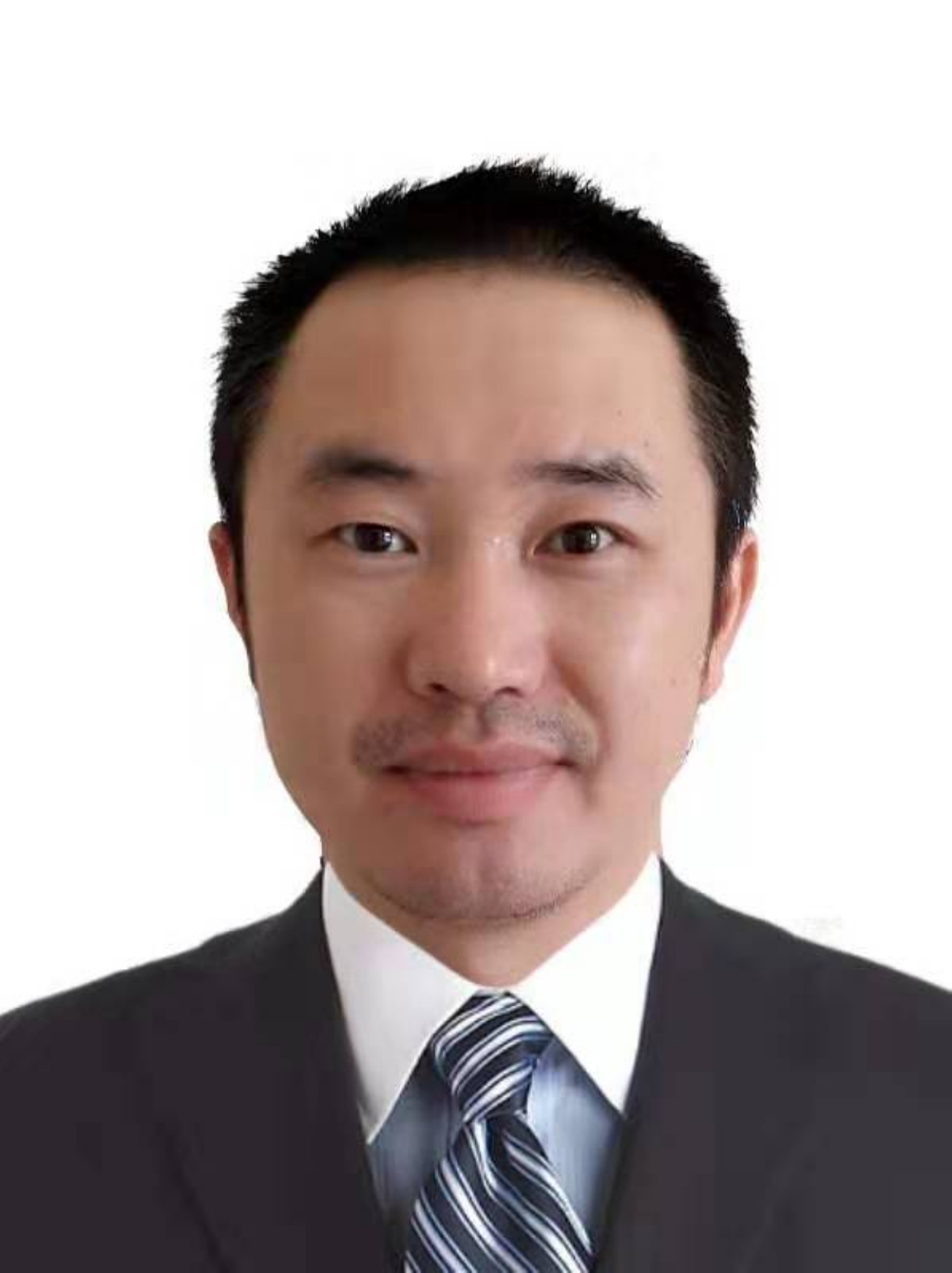}}]{Guangcan Liu}(M'11-SM'17) is currently a professor with the School of Automation, Southeast University, Nanjing, China. He received the bachelor's degree in mathematics and the Ph.D. degree in computer science and engineering from Shanghai Jiao Tong University, Shanghai, China, in 2004 and 2010, respectively. He was a Post-Doctoral Researcher with the National University of Singapore, Singapore, from 2011 to 2012, the University of Illinois at Urbana-Champaign, Champaign, IL, USA, from 2012 to 2013, Cornell University, Ithaca, NY, USA, from 2013 to 2014, and Rutgers University, Piscataway, NJ, USA, in 2014. He was a professor with the School of Automation, Nanjing University of Information Science and Technology, Nanjing, China, from 2014 to 2021. His research interests touch on the areas of machine learning, computer vision and signal processing. He is a senior member of the IEEE.
\end{IEEEbiography}

\end{document}